\newtheorem{theorem}{Theorem}
\def\N{\mathcal{N}}
\def\x{{\mathbf x}}
\def\q{{\mathbf q}}
\def\0{{\mathbf 0}}
\def\psib{{\boldsymbol \psi}}
\def\phib{{\boldsymbol \phi}}
\def\thetab{{\boldsymbol \theta}}
\newcommand{\Dcal}{\mathcal{D}}
\newcommand{\xc}{\bm{x}}
\newcommand{\yc}{\bm{y}}
\newcommand{\zc}{\bm{z}}
\newcommand{\Hb}{\mathbf{H}}
\crefname{section}{Sec.}{Secs.}
\Crefname{section}{Section}{Sections}
\Crefname{table}{Table}{Tables}
\crefname{table}{Tab.}{Tabs.}
\newcommand{\update}[1]{{\color{black}#1}}
\title{Laplacian Autoencoders for \\Learning Stochastic Representations} 
\author{%
  Marco Miani$^1$, \ \ Frederik Warburg$^1$, \\ \textbf{Pablo Moreno-Muñoz, \ \ Nicke Skafte Detlefsen, \ \ Søren Hauberg} \\
  \texttt{\{mmia, frwa, pabmo, nsde, sohau\}@dtu.dk} \\
  Technical University of Denmark\\
  \\
  \texttt{\url{https://github.com/FrederikWarburg/LaplaceAE}}
}
\begin{document}


\maketitle

\begin{abstract}
    Established methods for unsupervised representation learning such as variational autoencoders produce none or poorly calibrated uncertainty estimates making it difficult to evaluate if learned representations are stable and reliable. In this work, we present a Bayesian autoencoder for unsupervised representation learning, which is trained using a novel variational lower bound of the autoencoder evidence. This is maximized using Monte Carlo EM with a variational distribution that takes the shape of a Laplace approximation. We develop a new Hessian approximation that scales linearly with data size allowing us to model high-dimensional data. Empirically, we show that our Laplacian autoencoder estimates well-calibrated uncertainties in both latent and output space. We demonstrate that this results in improved performance across a multitude of downstream tasks.
\end{abstract}

\let\thefootnote\relax\footnotetext{$^1$ Denotes equal contribution\update{; author order determined by a simulated coin toss}.}
\everypar{\looseness=-1}
\section{Introduction}
Unsupervised representation learning is a brittle matter. Consider the \emph{classic autoencoder (\textsc{ae})}
\citep{rumelhart1986learning}, which compresses data $\xc$ to a low-dimensional representation $\zc$ from which data is approximately reconstructed. The nonlinearity of the model implies that sometimes small changes to data $\xc$ \update{give} a large change in the latent representation $\zc$ (and sometimes not). Likewise, for some data, reconstructions are of low quality, while for others it is near perfect. Unfortunately, the model does not have a built-in quantification of its uncertainty, and we cannot easily answer when the representation is reliable and accurately reflects data.

The celebrated \emph{variational autoencoder} (\textsc{vae}) \citep{kingma2014vae, rezende2014vae} address this concern directly through an explicit likelihood model $p(\xc | \zc)$ and a variational approximation of the representation posterior $p(\zc | \xc)$. Both these distributions have parameters predicted by neural networks that act similarly to the encoder--decoder pair of the classic autoencoder.

But is the \update{\textsc{vae}'s} quantification of reliability reliable? To investigate, we fit a \textsc{vae} with a two-dimensional latent representation to the \textsc{mnist} dataset \citep{lecun1998mnist}, and illustrate the predicted uncertainty of $p(\xc | \zc)$ in \cref{fig:teaser}a. The model learns to assign high uncertainty to low-level image features such as edges but predicts its smallest values far away from the data distribution. Not only is such behavior counter-intuitive, \update{but} it is also suboptimal in terms of data likelihood (Sec.~\ref{sec:background}). Retrospectively, this should not be surprising as the uncertainty levels away from the data are governed by the extrapolatory behavior of the neural network determining $p(\xc | \zc)$. This suggests that perhaps uncertainty should be a derived quantity rather than a predicted one.

From a Bayesian perspective\update{,} the natural solution is to form an (approximate) posterior over the weights of the neural networks. To investigate, we \update{adapt} a state-of-the-art implementation of a \emph{post-hoc} Laplace approximation \citep{daxberger2021laplace} of the weight posterior \update{to the autoencoder domain}. This amounts to training a regular autoencoder, and thereafter approximating the weight uncertainty with the Hessian of the loss (Sec.~\ref{sec:background}). \cref{fig:teaser}b shows that uncertainty now grows, as intuitively expected, with the distance to the data distribution, but there seems to be little semantic structure in the uncertainty in output space. This suggests that while the post-hoc procedure is computationally attractive it is too simplistic.~\looseness=-1

\textbf{In this paper} we introduce a new framework for Bayesian autoencoders in unsupervised representation learning. Our method takes inspiration from the Laplace approximation to build a variational distribution on the neural network weights. We first 
\update{propose a post-hoc \textsc{la} for autoencoders; showcasing good out-of-distribution detection capabilities, but lack of properly calibrated uncertainties in-distribution.}
\update{To address this,} we develop a fast and memory-efficient Hessian approximation, which allows us to maximize a variational lower bound using Monte Carlo EM, such that model uncertainty is a key part of model training rather than estimated post-hoc. \cref{fig:teaser}c gives an example of the corresponding uncertainty, which exhibits a natural and semantically meaningful behavior.\looseness=-1
\begin{figure}[t]
    \centering
    \includegraphics[width=\textwidth]{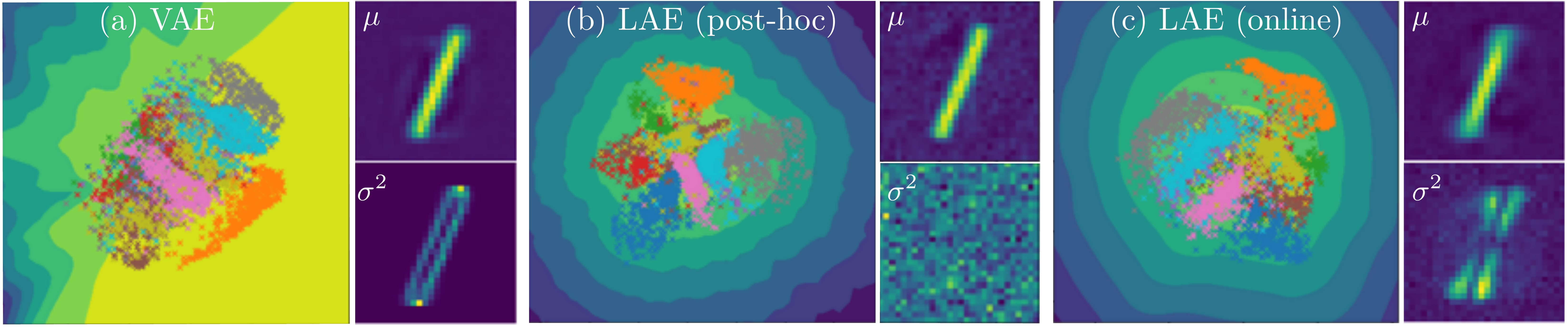}
    \caption{\update{2D latent representation of \textsc{mnist} overlaid a heatmap that describes the decoder uncertainty (yellow/blue indicates a low/high variance of the reconstructions). To the right of the latent spaces, we show the mean and variance of a reconstructed image (yellow indicates high values).} (a) The \textsc{vae} learns to estimate high variance for low-level image features such as edges but fails at extrapolating uncertainties away from training data. (b) Applying post-hoc Laplace to the \textsc{ae} setup shows much better extrapolating capabilities, but fails in estimating calibrated uncertainties in output space. (c) Our online, sampling-based optimization of a Laplacian autoencoder (\textsc{lae}) gives well-behaved uncertainties in both latent and output space.}
    \label{fig:teaser}
\end{figure}

\subsection{Background}\label{sec:background}

\textbf{The \textsc{vae}} is a latent variable model that parametrize the data density $p(\xc) = \int p(\xc | \zc) p(\zc) \mathrm{d}\zc$ using a latent variable (representation) $\zc$. Here $p(\zc)$ is a, usually standard normal, prior over the representation, and $p(\xc | \zc)$ is a likelihood with parameters predicted by a neural network.

The nonlinearity of the likelihood parameters \update{renders} the marginalization of $\zc$ intractable, and a variational lower bound of $p(\xc)$ is considered instead. To arrive at this, one first introduces a variational approximation $q(\zc | \xc) \approx p(\zc | \xc)$ and write $p(\xc) = \mathbb{E}_{q(\zc|\xc)}\left[ p(\xc | \zc) \sfrac{p(\zc)}{q(\zc | \xc)} \right]$. A lower bound on $p(\xc)$ then follows by a direct application of Jensen's inequality,
\begin{align}
  \update{\log p(\xc)} \geq \mathcal{L}_{\text{\textsc{vae}}}(\xc) = \mathbb{E}_{q(\zc|\xc)}\left[ \log p(\xc|\zc) \right] - \text{KL}(q(\zc|\xc) \| p(\zc)).
  \label{eq:vae_elbo}
\end{align}
If we momentarily assume that $p(\xc | \zc) = \N(\xc | \mu(\zc), \sigma^2(\zc))$, we see that optimally $\sigma^2(\zc)$ should be as large as possible away from training data in order to increase $p(\xc)$ on the training data \update{(\cref{sec:vae_optim})}. Yet this is not the observed empirical behavior in Fig.~\ref{fig:teaser}a. Since the $\sigma^2$ network is left untrained away from training data, its predictions \update{depend} on extrapolation. In practice, $\sigma^2$ \update{takes} fairly small values near training data (assuming the mean $\mu$ provides a reasonable data fit), and $\sigma^2$ \update{extrapolates arbitrary} even if this is suboptimal in terms of data likelihood. Similar remarks hold for other likelihood models $p(\xc|\zc)$ and encoder distributions $q(\zc|\xc)$: \emph{relying on neural network extrapolation to predict uncertainty does not work}.

\begin{figure}[th!]
    \centering
    \includegraphics[width=\textwidth]{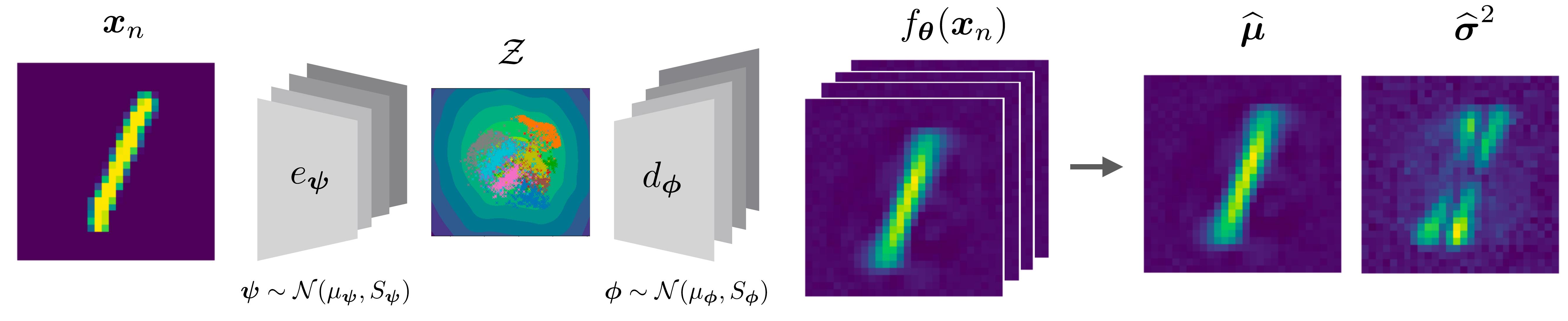}
    \vspace{-5mm}
    \caption{\textbf{Model overview.} We learn a distribution over parameters such that we can sample encoders $e_\psib$ and decoders $d_\phib$. This allow us to compute the empirical mean and variance in both the latent space $z$ and the output space $f_\thetab(\xc_n) = d_\phib(e_\psib(\xc_n))$.}
    \label{fig:overview}
\end{figure} 

\textbf{The Laplace approximation} \citep{laplace1774memoire,mackay1992laplace} is an integral part of our proposed solution. In the context of Bayesian neural networks, we seek the weight-posterior $p(\theta | \mathcal{D}) \propto \exp(-\mathcal{L}(\mathcal{D}; \theta))$, where $\theta$ are network weights, $\mathcal{D}$ is the training data, and $\mathcal{L}$ is the applied loss function interpreted as an unnormalized log-posterior. This is generally intractable and Laplace's approximation (\textsc{la}) amounts to a second-order Taylor expansion around a chosen weight vector $\theta^*$
\begin{align}\label{eq:taylor_laplace}
  \log p(\thetab | \mathcal{D})
    & = \mathcal{L}^*
    + (\thetab - \thetab^*)^{\top} \nabla \mathcal{L}^*
    + \frac{1}{2} (\thetab - \thetab^*)^{\top} \nabla^2 \mathcal{L}^* (\thetab - \thetab^*) + \mathcal{O}(\|\thetab - \thetab^*\|^3)
\end{align}
where we use the short-hand $\mathcal{L}^* = \mathcal{L}(\mathcal{D}; \theta^*)$. The approximation, thus, assumes that $p(\theta | \mathcal{D})$ is Gaussian. Note that when $\theta^*$ is a \textsc{map} estimate, the first order term vanishes and the second order term is negative semi-definite. We provide more details on the Laplace approximation in \cref{sec:laplace_approx_appendix}. In practice, computing the full Hessian is too taxing both in terms of computation and memory, and various approximations are applied (Sec.~\ref{sec:hessian}).

\section{Laplacian Autoencoders}\label{sec:method}
We consider unsupervised representation learning from i.i.d.\@ data $\Dcal\!=\!\{\xc_n\}^{N}_{n=1}$ consisting of observations $\xc_n \in \mathbb{R}^{D}$. We also define a continuous latent space such that representations $\zc_n\!\in\!\mathbb{R}^{K}$. Similar to \textsc{ae}s \citep{hinton2006reducing}, we consider two neural networks $e_{\psib}:\mathbb{R}^{D}$$\rightarrow$$\mathbb{R}^{K}$ and $d_{\phib}$$:\mathbb{R}^{K}$$\rightarrow$$\mathbb{R}^{D}$, widely known as the \emph{encoder} and \emph{decoder}. These have parameters $\thetab=\{\psib,\phib\}$. We refer to the composition of encoder and decoder as $f_\thetab = d_\psib \circ e_\phib$.

\textbf{Model overview.}~~~The autoencoder network structure implies that we model the data as being distributed on a $K$-dimensional manifold parametrized by $\thetab$. We then seek the distribution of the \emph{reconstruction} $\xc_{\text{rec}} = f_{\thetab}(\xc)$ given observation $\xc$, where the uncertainty comes from $\thetab$ being unknown,\looseness=-1
\begin{align}
    p(\xc_{\text{rec}}|\xc,f)
    & =
    \mathbb{E}_{\thetab \sim p(\thetab|\xc,f)} [p(\xc_{\text{rec}}|\thetab,\xc,f)].
    \label{eq:marginal_xrec}
\end{align}
Notice that we explicitly condition on $f$, which is the operator $\thetab \mapsto f_{\thetab}$, even if this is not stochastic; this \update{conditioning will become important later on to distinguish between the distribution deduced by $f$ and its linearization $f^{(t)}$.} 
Mimicking the standard autoencoder reconstruction loss, we set $p(\xc_{\text{rec}}|\thetab,\xc,f)\!=\!\mathcal{N}(\xc_{\text{rec}}|f_\thetab(x),\mathbb{I})$. Since $p(\thetab|\xc,f)$ is unknown, the reconstruction likelihood~\eqref{eq:marginal_xrec} is intractable, and approximations are in order. Similar to \citet{blundell2015bayesbybackprop}, we resort to a Gaussian approximation, but rather than learning the variance variationally, we opt for \textsc{la}. This will allow us to sample \textsc{nn}s and deduce uncertainties in both latent and output space as illustrated in~\cref{fig:overview}.

\textbf{Intractable joint distribution.}~~~Any meaningful approximate posterior over $\thetab$ should be similar to the marginal of the joint distribution $p(\thetab, \xc_{\text{rec}} | \xc, f)$.
This marginal is 
\begin{align}
    p(\thetab|\xc,f)
    =
    \mathbb{E}_{\xc_{\text{rec}}\sim p(\xc_{\text{rec}}|\xc,f)} [p(\thetab|\xc_{\text{rec}},\xc,f)]
    \label{eq:marginal_thetab}
\end{align}
which can be bounded on a log-scale using Jensen's inequality,
\begin{align}
    \log p(\thetab|\xc,f)
    \geq
    \mathcal{L}_{\thetab}
    =
    \mathbb{E}_{\xc_{\text{rec}}\sim p(\xc_{\text{rec}}|\xc,f)} [\log p(\thetab|\xc_{\text{rec}},\xc,f)].
    \label{eq:marginal_thetab_log}
\end{align}
Our first approximation is a \textsc{la} of $p(\thetab|\xc,f) \approx q^t(\thetab|\xc,f) = \mathcal{N}(\thetab | \thetab_t, \mathbf{H}_t^{-1})$, where we postpone the details on how to acquire $\thetab_t$ and $\mathbf{H}_t$. These will eventually be iteratively computed from the lower bound \eqref{eq:marginal_thetab_log}; hence the $t$ index. \cref{fig:linear_vs_nonlinear} (a) illustrates the situation thus far: $p(\thetab|\xc,f)$ is approximately Gaussian, but the non-linearity of $f$ gives $p(\xc_{\text{rec}}|\xc,f)$ a non-trivial density. 

\textbf{Linearization for gradient updates.}~~~Standard gradient-based learning can be viewed as a linearization of $f$ in $\thetab$, i.e.\@ for a loss $\mathcal{L} = l(f_{\thetab})$, the gradient is $\nabla_{\thetab} \mathcal{L} = J_{\thetab} f_{\thetab}\, \nabla_{f} l(f)$, where $J_{\thetab} f_{\thetab}$ is the Jacobian of $f$. In a similar spirit, we linearize $f$ in $\thetab$ in order to arrive at a tractable approximation of $p(\xc_{\text{rec}}|\xc,f)$. Specifically, we perform a Taylor expansion around $\thetab_t$
\begin{align}
    f_{\thetab}(\xc) 
      &= 
      \underbrace{
      f_{\thetab_t}(\xc) + J_{\thetab} f_{\thetab_t}(\xc) (\thetab - \thetab_t)
      }_{=:f^{(t)}_\thetab(\xc)}
      + \mathcal{O}\left(\| \thetab - \thetab_t \|^2\right)
  \label{eq:linearize_in_theta}
\end{align}
where $f^{(t)}$ denote the associated first-order approximation \update{of $f$}. Under this approximation, the joint distribution $p(\thetab, \xc_{\text{rec}} | \xc, f)$ becomes Gaussian, $p(\thetab, \xc_{\text{rec}} | \xc, f^{(t)}) \approx \mathcal{N}\left(\thetab, \xc_{\text{rec}} | \mu_{t}, \Sigma_{t}\right)$, with
\begin{align}
   \mu_{t}
      &= \begin{pmatrix}
           \thetab_t \\
           f^{(t)}_{\thetab_t} (\xc)
         \end{pmatrix},
   \text{ and }
   \Sigma_{t} = \begin{pmatrix}
     \mathbf{H}_t^{-1} & J_{\thetab} f_{\thetab_t}(\xc)^{\top} \\
     J_{\thetab} f_{\thetab_t}(\xc) & \left( J_{\thetab} f_{\thetab_t}(\xc)^{\top} \mathbf{H}_t J_{\thetab} f_{\thetab_t}(\xc) \right)^{-1} + \mathbb{I}
   \end{pmatrix}.
   \label{eq:covariance_update}
\end{align}
This approximation is illustrated in \cref{fig:linear_vs_nonlinear}(b). \update{We provide the proof in \cref{sec:proof_covariance}.}

\begin{figure}
    \centering
    \includegraphics[width=0.9\linewidth]{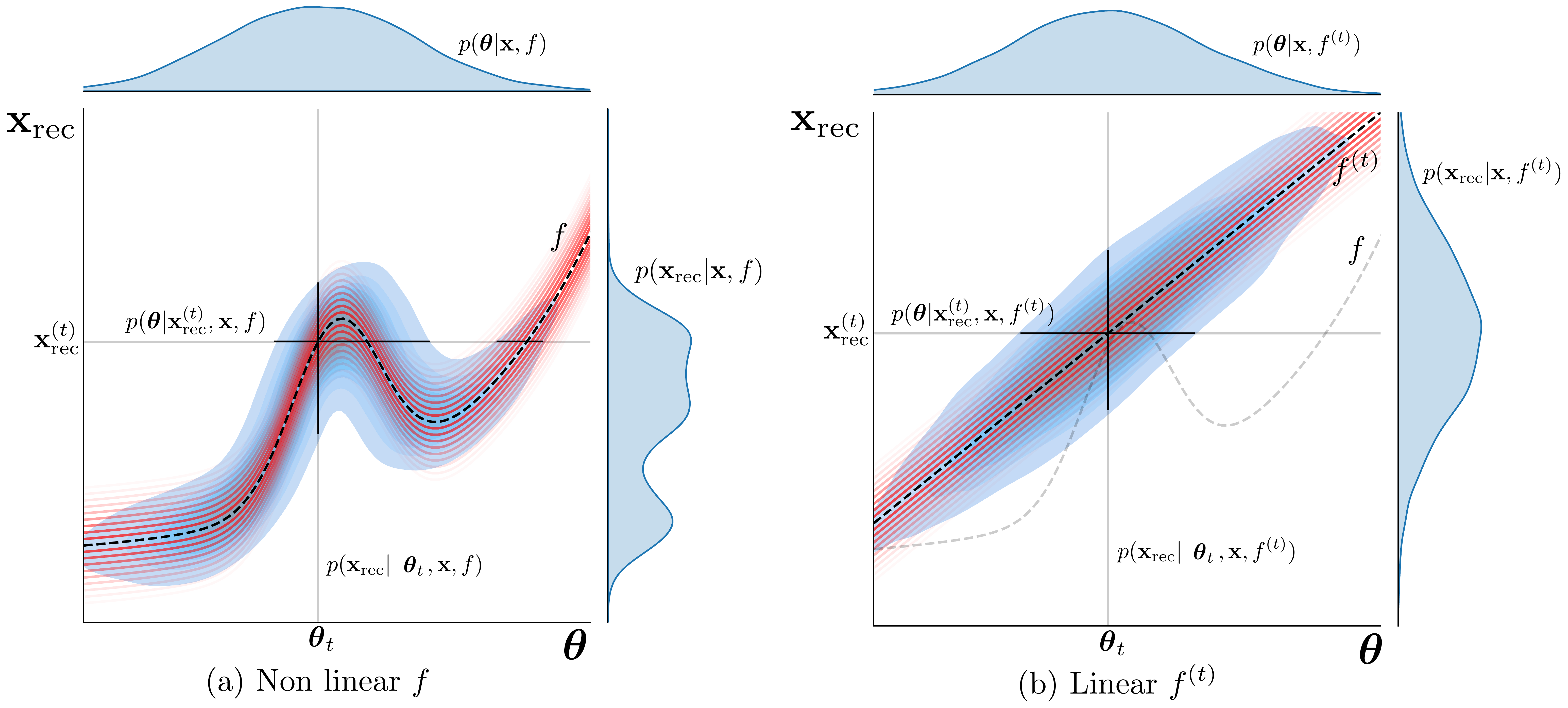}
     \caption{Illustrative example for fixed $\xc$. The likelihood for a fixed $\thetab_t$, shown by the columns, are assumed Gaussian $p(\xc_{\text{rec}}|\thetab_t, \xc,f) = N(f_{\thetab_t}(\xc), \mathbb{I})$. We model the marginalised density $p(\thetab | \xc, f)$ (first axis) over parameters $\thetab$ with Gaussians. With the additional assumption of linear $f$ then $p(\xc_{\text{rec}}|\xc, f)$ (second axis) is Gaussian. This makes the joint distribution tractable. }
     \label{fig:linear_vs_nonlinear}
\end{figure}

\textbf{Iterative learning.}~~~With these approximations we can readily develop an iterative learning scheme which updates $q(\thetab|\xc,f)$. The mean of this approximate posterior can be updated according to a standard variational gradient step, $\thetab_{t+1} = \thetab_t + \lambda \nabla_{\thetab} \mathcal{L}_{\xc_{\text{rec}}}$, where
\begin{align}
    \mathcal{L}_{\xc_{\text{rec}}} = \mathbb{E}_{\thetab \sim q^t(\thetab|\xc,f)} [\log p(\xc_{\text{rec}}|\thetab,\xc,f^{(t)})],
\end{align}
is a lower bound on Eq.~\ref{eq:marginal_xrec}, which we evaluate with a single Monte Carlo sample. \update{Following the \textsc{la},} 
the covariance of $q^{t+1}(\thetab|\xc,f)$, 
should be the inverse of the Hessian of $\log p(\thetab|\xc,f)$ at $\thetab_{t+1}$. Since this is intractable, we instead compute the Hessian of the lower bound \eqref{eq:marginal_thetab_log}
\begin{equation}
    \mathbf{H}_{t+1}
      \!=\! -\nabla^2_{\thetab} \mathcal{L}_{\thetab} \big|_{\thetab_{t+1}}
       \!=\! \mathbb{E}_{p(\xc_{\text{rec}}|\xc, f^{(t)})} 
         \!\!\left[ 
           -\nabla^2_{\thetab} \log p(\xc_{\text{rec}}|\thetab, \xc, f^{(t)}) -
           \!\nabla^2_{\thetab} \log q^t (\thetab | \xc, f)
         \right]\!\Big|_{{\thetab_{t+1}}}
\end{equation}
The last term can be approximated since $\nabla^2_{\thetab}\log q^t(\thetab|\xc,f) |_{\thetab=\thetab_{t+1}}=-\mathbf{H}_t+\mathcal{O}\left(\|\thetab_{t+1}-\thetab_t\|\right)$.

To efficiently cope with the $\mathcal{O}$-term, we introduce a parameter $\alpha$, such that the final approximation is
\begin{align}
    \mathbf{H}_{t+1}
      &\approx
      (1 - \alpha) \mathbf{H}_t + \mathbb{E}_{p(\xc_{\text{rec}}|\xc, f)} \left[ 
           -\nabla^2_{\thetab} \log p\left(\xc_{\text{rec}}\big|\thetab, \xc, f^{(t)}\right)
      \right]\Big|_{\thetab_{t+1}} \\
      &=
      (1 - \alpha) \mathbf{H}_t -
           {J_{\thetab} f^{(t)}_\thetab}^{\top} \nabla^2_{\xc_{\text{rec}}} \log p\left(\xc_{\text{rec}} \big| \thetab_{t+1}, \xc, f^{(t)}\right) {J_{\thetab} f^{(t)}_\thetab}.
\end{align}
\update{where $J_{\thetab} f^{(t)}_\thetab$ is independent of which $\theta$ we evaluate in and $\nabla^2_{\xc_{\text{rec}}} \log p\left(\xc_{\text{rec}} \big| \thetab_{t+1}, \xc, f^{(t)}\right)$ is trivial to compute for common losses, i.e.\@ for MSE it is the identity. The parameter $\alpha$ can be viewed as a geometric running average that is useful for smoothing out results computed on a minibatch instead of on the full training set, similar to momentum-like training procedures. It further allows for non-monotonically-increasing precision. Note that we revisit data during training, and the precision matrix is updated for every revisit. Thus, the forgetting induced by $\alpha$ is a wanted behavior to avoid infinite precision in the limit of infinite training time. In practice, we set $\alpha$ equal to the learning rate, where the practical intuition is that when $\alpha$ is small, the network uncertainty decreases faster.} 

The overall training procedure is summarized in \cref{fig:training_procedure}. We initialize $q^0$ as a Gaussian with $\thetab_0=0$ and $\mathbf{H}_0=\mathbb{I}$. We provide more details on the model, the linearization, and iterative learning in \cref{sec:model_appendix}.

\begin{figure}[t]
    \centering
    \includegraphics[width=0.8\textwidth]{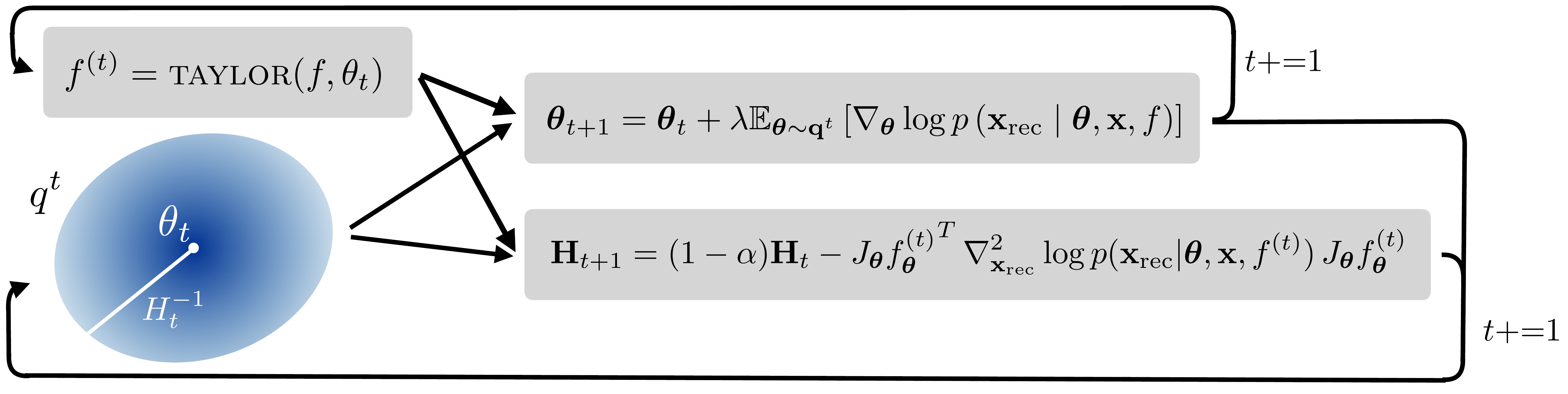}
    \caption{\textbf{\update{Iterative} training procedure.} Given a distribution $\q^t$ over parameters, and a linearized function $f^{(t)}$, compute first and second-order derivatives to update the distribution on parameters.}
    \label{fig:training_procedure}
\end{figure}

\textbf{Why not just\ldots?}~~~The proposed training procedure may at first appear non-trivial, and it is reasonable to wonder if existing methods could be applied to similar results. Variational inference often achieves similar results to Laplace approximations, so could we use `Bayes by Backprop' \citep{blundell2015bayesbybackprop} to get an alternative Gaussian approximate posterior over $\thetab$? Similar to the supervised experiences of \citet{valentin2020bnn}, we, unfortunately, found this approach too brittle to allow for practical model fitting. But then perhaps a post-hoc \textsc{la} as proposed by \citet{daxberger2021laplace} for supervised learning? Empirically, we found it to be important to center the approximate posterior around a point where the Hessian provides useful uncertainty estimates. Our online training moves in this direction as the Hessian is part of the procedure, but this is not true for the post-hoc \textsc{la}.

Conceptually, we argue that our approach, while novel, is not entirely separate from existing methods. Our reliance on lower bounds \update{makes} the method an instance of variational inference \citep{jordan1999introduction,opper2009variational}, and we maximize the bounds using Monte Carlo EM \citep{cappe2009online}. We rely on a \textsc{la} as our choice of variational distribution, which has also been explored by \citet{park2019variational}. Finally, we note that our linearization trick \eqref{eq:linearize_in_theta} has great similarities to classic extended Kalman filtering \citep{gelb1974applied}.

\section{Scaling the Hessian to Large Images}\label{sec:hessian}

The largest obstacle to apply \textsc{la} in practice \update{stems} from the Hessian matrix. This matrix has a quadratic memory complexity in the number of network parameters, which very quickly exceeds the capabilities of available hardware. 
To counter this issue, several approximations have been proposed \citep{ritter2018scalable, botev2017practical, Martens2015Kron} that improve the scaling w.r.t.\@ to the number of parameters.
The currently most efficient Hessian implementations \citep{dangel2020backpack, daxberger2021laplace} builds on the generalized Gauss-Newton (\textsc{ggn}) approximation of the Hessian
\begin{equation}
  \nabla^2_{\thetab^{(l)}} \mathcal{L}(f_\thetab(\xc))
  \approx
  J_{\thetab^{(l)}}f_\thetab\left(\xc\right)^{\!\top} \cdot
    \nabla^2_{\xc_\text{rec}}\mathcal{L}\left(\xc_\text{rec}\right) \cdot
    J_{\thetab^{(l)}}f_\thetab\left(\xc\right),
\end{equation}

\update{for a single layer $l$, which} neglects second order derivatives of $f$ w.r.t. the parameters. Besides, the computational benefits of this approximations, previous works on \textsc{la} \citep{daxberger2021laplace} relies on \textsc{ggn} to ensure that the Hessian is always semi-negative definite. In contrast, the model presented in \cref{sec:method} implies that \textsc{ggn} is no longer a practical and unprincipled trick, but rather the exact Hessian\update{~\citep{immer2021locallinerarization}}. 

Albeit relying on first order derivates, the layer-block-diagonal \textsc{ggn}, which \update{assumes} that layers are independent of each other, scales quadratically with the \emph{output} dimension of the considered neural network $f$. This lack of scaling is particularly detrimental for convolutional layers as these have low parameter counts, but potentially very high output dimensions.

Expanding $J_{\thetab^{(l)}}f_\thetab(\xc)$ with the chain rule, one realizes that the Jacobian can be computed as a function of the Jacobian of the next layer. \cref{fig:backprop_hessian} illustrate that an intermediate quantity $M$, which is initialised as $\nabla^2_{\xc_\text{rec}}\mathcal{L}(\xc_\text{rec})$, can be efficiently backpropagated through multiplication with the Jacobian w.r.t. input of each layer.
This process leads to a \textbf{block diagonal} approximation of the Hessian as illustrated in \cref{fig:ggn_methods}(a).  
However, diagonal blocks are generally too large to store and invert. To combat this, each block can be further approximated by its \textbf{exact diagonal} \citep{daxberger2021laplace} as depicted in \cref{fig:ggn_methods}(b). This scales linearly w.r.t.\@ parameters, but still scales quadratically w.r.t.\@ the output resolution (\cref{tab:computational_cost}).

\begin{figure}
  \begin{minipage}[c]{0.50\textwidth}
    \begin{tabular}{l|ll}
    \toprule
        \textsc{approximations} & \textsc{memory} & \textsc{time} \\ \midrule
        Block diag. 
          & $\mathcal{O}(R_m^2 + W_s^2)$
          & $\mathcal{O}(R_s^2 + W_s^2)$ \\
        KFAC 
          & $\mathcal{O}(R_s^2\, + W_s)$
          & $\mathcal{O}(R_s^2 + W_s)$ \\
        Exact diag. 
          & $\mathcal{O}(R_m^2 + W_s)$
          & $\mathcal{O}(R_s^2 + W_s)$ \\
        Approx.\@ diag. (ours) 
          & $\mathcal{O}(R_m + W_s)$
          & $\mathcal{O}(R_s + W_s)$ \\
        Mixed diag. (ours)
          & $\mathcal{O}(R_m + W_s)$
          & $\mathcal{O}(R_s + W_s)$ \\
          \bottomrule
    \end{tabular}
  \end{minipage}\hfill
  \begin{minipage}[c]{0.40\textwidth}
    \hspace{0.1mm}
    \captionof{table}{\textbf{Memory \& time complexity of Hessian approximations}. For an $L$-layer network, let $R_m$=$\max_{l=0\dots L}|x^{(l)}|$, $R_s$=$\sum_{l}|x^{(l)}|$, $R_s^2$=$\sum_{l}|x^{(l)}|^2$, and $W_s$=$\sum_l|\thetab^{(l)}|$. Only our approximation scales linearly with both the output resolution and parameters.}
    \label{tab:computational_cost}
  \end{minipage}
\end{figure}

\begin{figure}
    \centering
    \includegraphics[width=\textwidth]{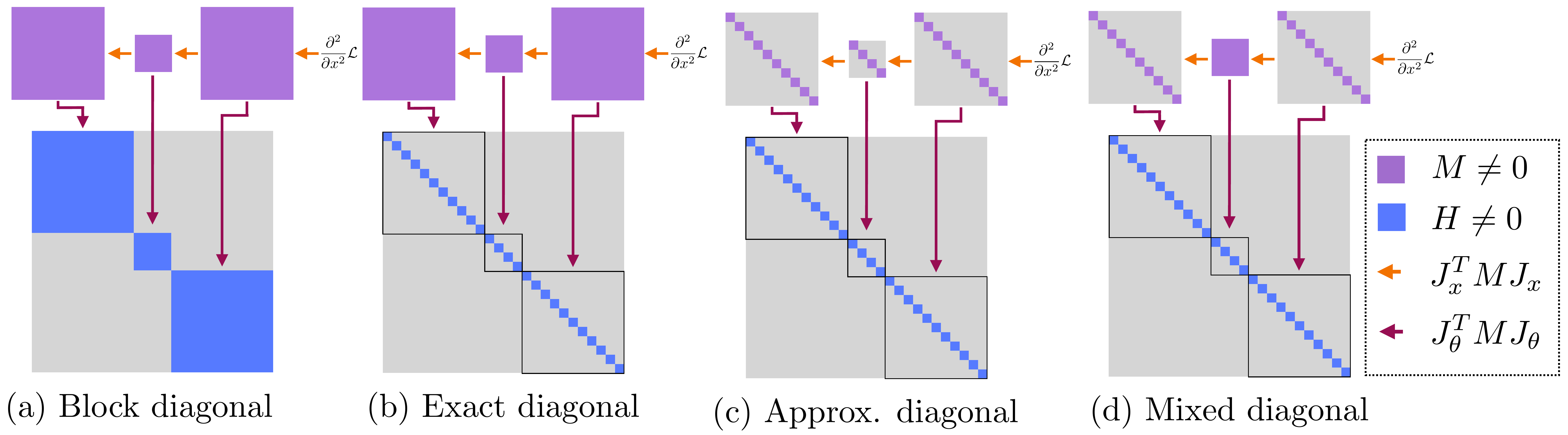}
    \vspace{-3mm}
    \caption{\textbf{Comparison of Hessian approximation methods.} Common approximations (a--b) scale quadratically with the output resolution. Our proposed approximate and mixed diagonal Hessians (c--d) scale linearly with the resolution. This is essential for scaling the \textsc{lae} to large images.}
    \label{fig:ggn_methods}
\end{figure}

\begin{wrapfigure}[14]{r}{0.4\textwidth} 
	\centering
	\vspace{-0.9cm}%
	\includegraphics[width=0.3\textwidth]{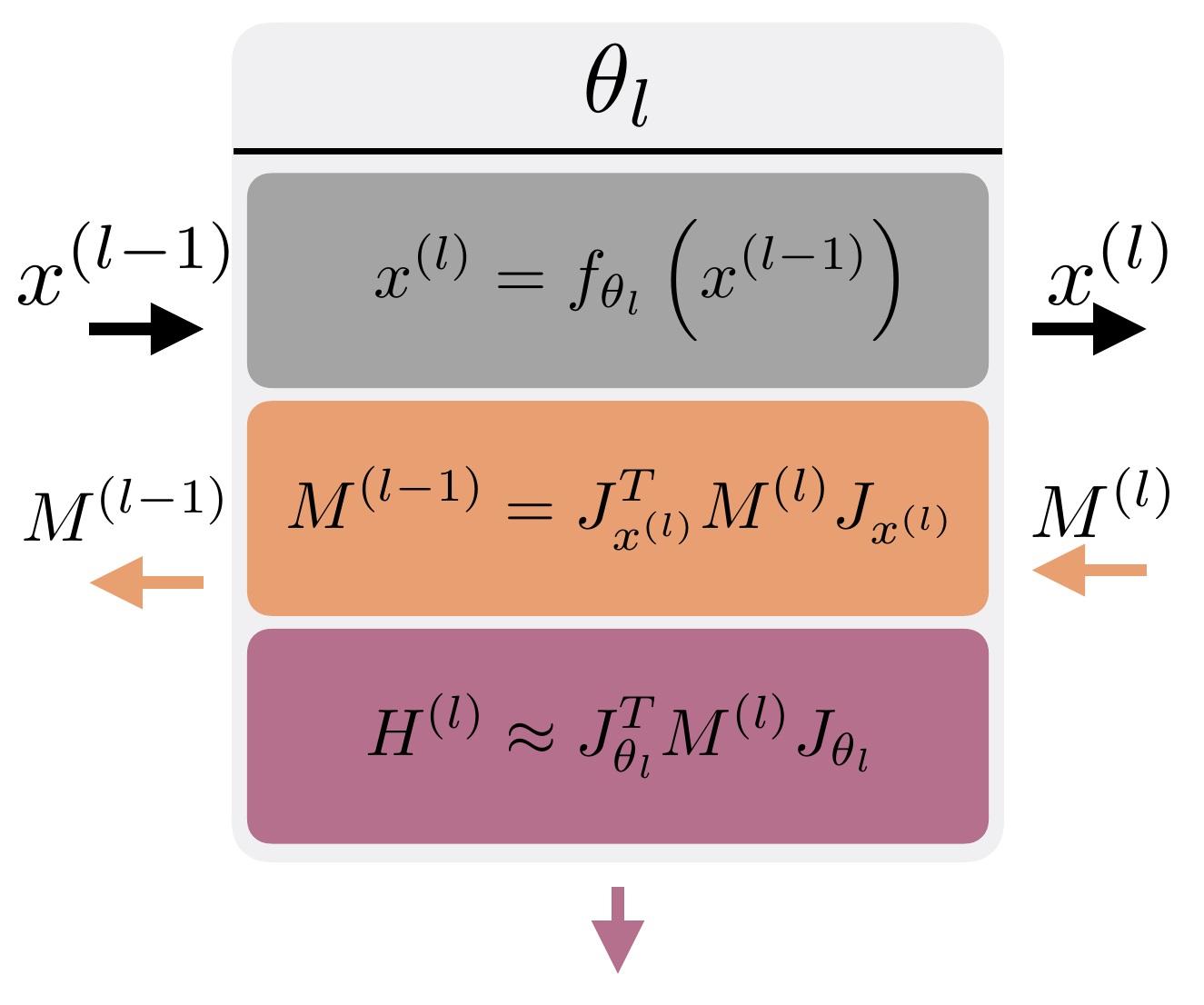}
	\vspace{-3mm}    
    \caption{Forward pass of feature map $x$ for layer $l$ with parameters $\thetab^{(l)}$ and \textcolor{orange}{extended backward pass in which $M$ is backpropagated to previous layers}. \textcolor{violet}{Via the chain rule and $M$ the Hessian of each layer can be computed efficiently.}}
    \label{fig:backprop_hessian}
\end{wrapfigure}

To scale our Laplacian autoencoder to high-dimensional data, we propose to approximate the diagonal of the Hessian rather than relying on exact computations. This is achieved by only backpropagating a diagonal form of $M$ as illustrated in \cref{fig:ggn_methods}(c). This assumes that features from the same layer are uncorrelated and consequently \update{have} linear complexity in both time and memory with respect to the output dimension (\cref{tab:computational_cost}). This makes it viable for our model.\looseness=-1

We can further tailor this approximation to the autoencoder setting by leveraging the bottleneck architecture. We note that the quadratic scaling of the exact diagonal Hessian is less of an issue in the layers near the bottleneck than in the layers closer to the output space. We can therefore dynamically switch between our approximate diagonal and the exact one, depending on the feature dimension. This lessens the approximation error while remaining tractable in practice. We provide more details on the fast hessian computations in \cref{sec:fast_hessian_appendix}.

\section{Related Work}
Deep generative models, and particularly the family of variational auto-encoders (\textsc{vae}s) \citep{kingma2014vae, rezende2014vae}, address unsupervised representation learning from a probabilistic viewpoint by approximating the posterior over the representation space. Despite their widespread adoption, model parameters are still \emph{deterministic} and sensitive to ill-suited local minima, e.g.\@ over-fitted to training data \citep{zhang21generalization}, which may cause poor generalization. The Bayesian \textsc{nn}s favour inference over the \textsc{nn} weights for addressing such issues \citep{mackay1995probable,neal1996bayesian}. This approach \update{deduces} distributions on data space by learning posterior distributions on the parameter space. However, several shortcomings \citep{wenzel2020good}, e.g.\@ expensive training, tuning, and implementations, often limit their applicability to autoencoder-style models. Alternatively, other methods such as deep ensembles~\citep{lakshminarayanan2017ensembles}, stochastic weight averaging (\textsc{swag})~\citep{maddox2019swag} or Monte-Carlo dropout~\citep{yarin2016mcdropout} also promise Bayesian approximations to \textsc{nn} weight's posterior, but at the cost of increased training time, poor empirical performance or limited Bayesian interpretation. 

As demonstrated by \cite{daxberger2021laplace} \textsc{la} is a scalable and well-behaved alternative to Bayesian \textsc{nn}s if used \emph{post-hoc} to approximate the intractable posterior over the weights after \emph{maximum-a-posteriori} (\textsc{map}) training in classification and regression. The general utility of \textsc{la} has also motivated its use as an approximation to the marginal likelihood over \textsc{nn} weights. Recent methods, including \citet{daxberger2021laplace}, have explored this path to find model hyperparameters \citep{immer2021scalable} or learning invariances \citep{immer2022invariance}. However, its computational burden, for instance in Hessian matrices, has prescribed diagonal or Kronecker factored approximations \citep{ritter2018scalable,martens2015optimizing,botev2017practical}, which are now widely used for second-order optimization. We provide more details on the connections to existing hessian based methods~\cite{daxberger2021laplace, Zhang2017NoisyNaturalGradient, Khan2017Variationaladaptivenewton}, Bayes by Backpropagation~\cite{blundell2015bayesbybackprop} and Adam~\cite{KingmaB14adam} in \cref{sec:extended_related_works}.

The \emph{full} Bayesian perspective on \textsc{vae} weights was first explored by \citet{daxberger2019bayesian} which we find similar in spirit to our work. In contrast to them (1) we follow a principled Bayesian derivation. (2) Neither do we depend on Hamiltonian Monte-Carlo sampling, which is generally hard to scale to efficient training.

\begin{figure}
    \centering
    \includegraphics[width=0.4\textwidth]{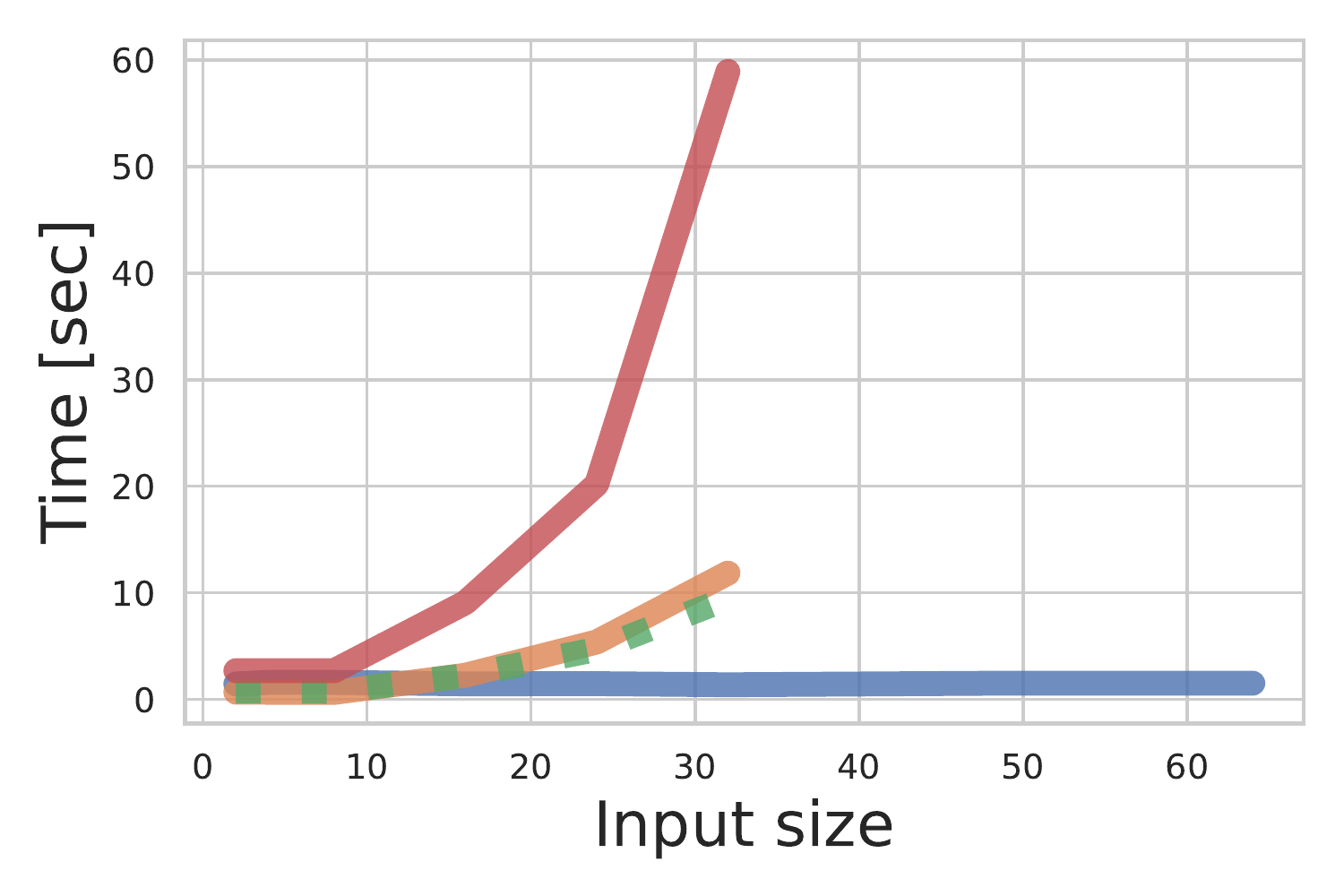}
    \includegraphics[width=0.4\textwidth]{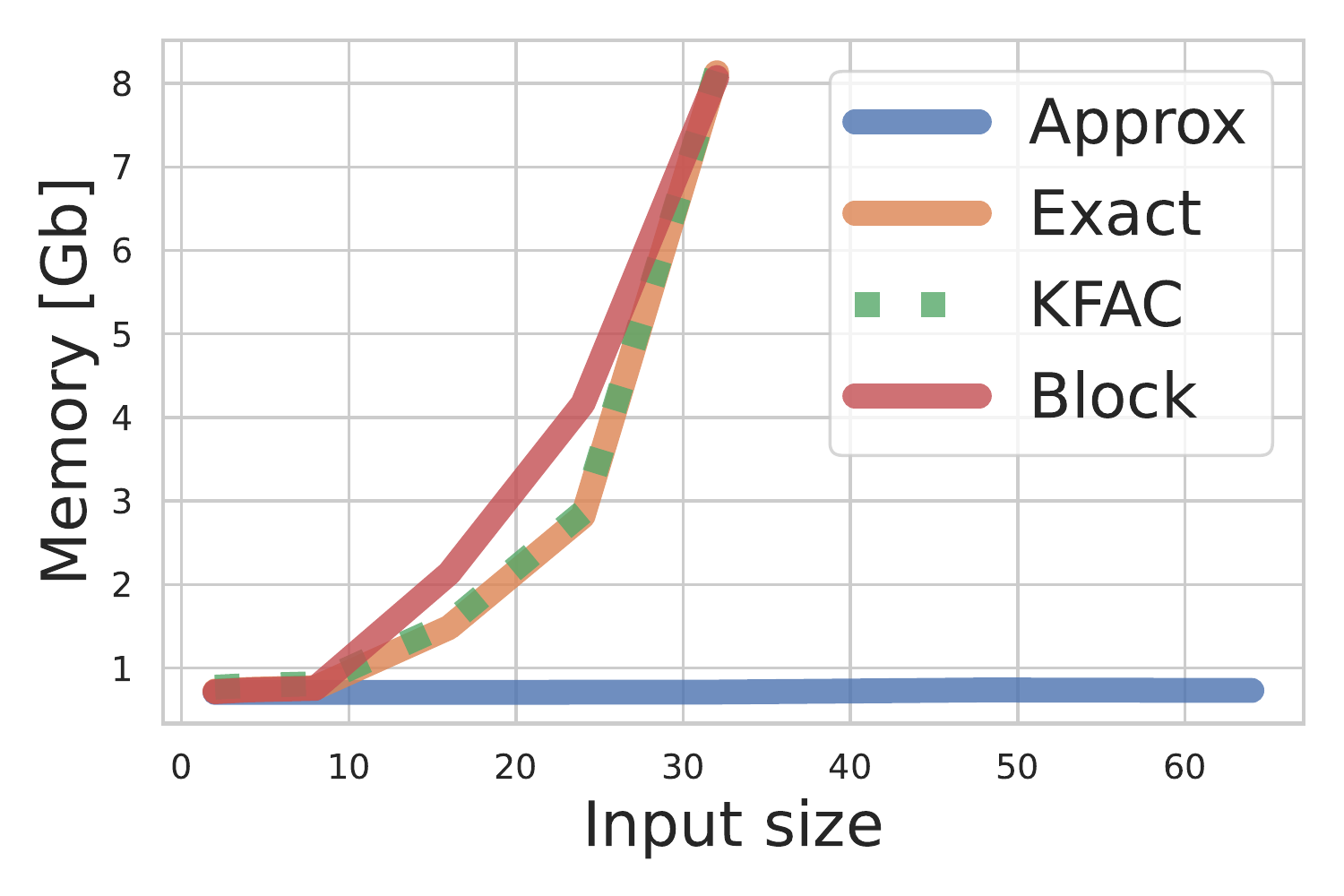}
\captionof{figure}{\textbf{Memory \& time usage of Hessian approximations.} The exact and \textsc{kfac} scales poorly with the image resolution. In contrast, our proposed approximate diagonal Hessian scales linearly.}
    \label{fig:computational_comparison}
\end{figure}

\section{Experiments}
First, we demonstrate the computational advantages of the proposed Hessian approximation, and then that our sampling-based training leads to well-calibrated uncertainties that can be used for \textsc{ood} detection, data imputation, and semi-supervised learning. For all the downstream tasks we consider the following baselines: \textsc{ae} \citep{hinton2006reducing} with constant and learned variance, \textsc{vae} \citep{rezende2014vae, kingma2014vae}, Monte-Carlo dropout \textsc{ae} \citep{yarin2016mcdropout} and Ensembles of \textsc{ae} \citep{lakshminarayanan2017ensembles}. We extend StochMan\cite{software:stochman} with the Hessian backpropagation for the approximate and mixed diagonals. The training code is implemented in PyTorch and available$^2$\footnote{$2$ \url{https://github.com/FrederikWarburg/LaplaceAE}}. \cref{sec:experiment_appendix} provides more details on the experimental setup.

\textbf{Efficient Hessian Approximation.} For practical applications, training time and memory usage of the Hessian approximation must be kept low. We here show that the proposed approximate diagonal Hessian is sufficient and even outperforms other approximations when combined with our online training.\looseness=-1

\cref{fig:computational_comparison} show the time and memory requirement for different approximation methods as a function of input size for a 5-layer convolutional network that \update{preserves} channel and input dimension. As baselines we use efficient implementations of the exact and \textsc{kfac} approximation~\citep{daxberger2021laplace, dangel2020backpack}. The exact diagonal approximation run out of memory for an $\sim 36 \times 36 \times 3$ image on a $11$ Gb NVIDIA GeForce GTX 1080 Ti. In contrast, our approximate diagonal Hessian scales linearly with the resolution, which is especially beneficial for convolutional layers. 


\begin{minipage}[ht!]{\textwidth}
\begin{minipage}{0.5\textwidth}

    \includegraphics[width=\textwidth]{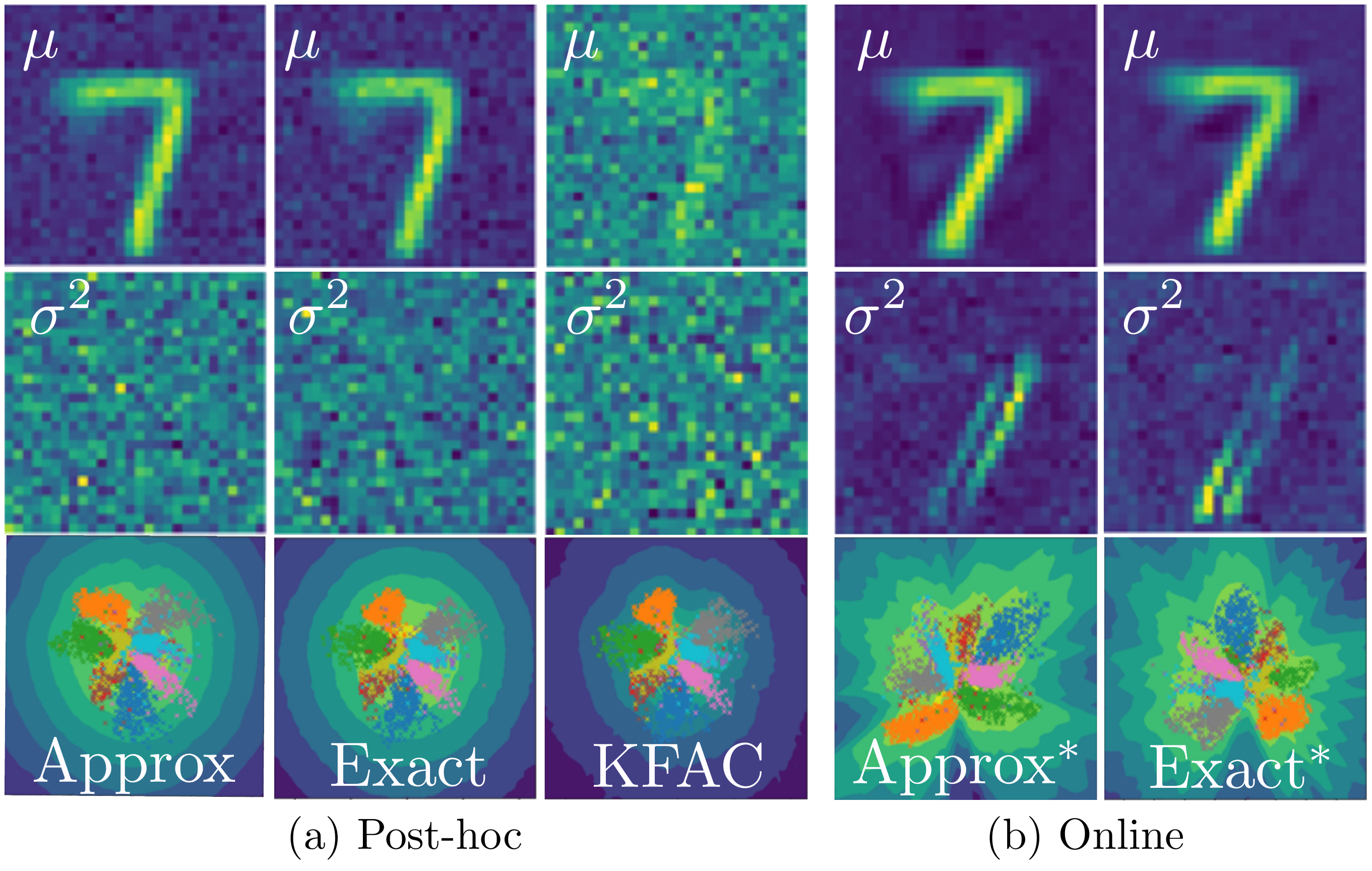}
    \captionof{figure}{Mean and variance of $100$ sampled \textsc{nn}.\looseness=-1 }
    \label{fig:hessian_approx_examples}
\end{minipage}
\hspace{3mm}
\begin{minipage}{0.47\textwidth}
\centering
    \update{\resizebox{\textwidth}{!}{\begin{tabular}{l|ll}
    \toprule
        Hessian &  $- \log p(x) \downarrow$ & MSE $\downarrow$\\ \midrule
        
        \textsc{kfac} & $9683.9 \pm 2455.0$ & $121.6 \pm 24.5$\\
        Exact & $283.3 \pm 88.6$ & $27.1 \pm 0.9$\\
        Approx & $232.0 \pm 65.5$ & $26.6 \pm 0.6$\\
        Exact$^*$ & $25.8 \pm 0.2$ & $25.7 \pm 0.2$ \\
        Approx$^*$ & $25.9 \pm 0.4$ & $25.8 \pm 0.4$ \\
    \bottomrule
    \end{tabular}}}
\captionof{table}{Online training (indicated by $^*$) outperforms post-hoc \textsc{la}. The approximate diagonal has similar performance to the exact diagonal for both post-hoc and online \textsc{la}.}
\label{tab:approx_quality}
\end{minipage}
\end{minipage}

\cref{tab:approx_quality} shows that the exact or approximate Hessian diagonal has similar performance \update{for both post-hoc and online training}. Using post-hoc \textsc{la} results in good mean reconstructions (low MSE), but each sampled \textsc{nn} does not give good reconstructions (low $\log p(x)$). Using our online training procedure results in a much higher log-likelihood. This \update{indicates} that every sampled \textsc{nn} \update{predicts} good reconstructions. 

\cref{fig:hessian_approx_examples} shows the latent representation, mean, and variance of the reconstructions with the \textsc{kfac}, exact and approximate diagonal for both post-hoc and online setup. Note that the online training makes the uncertainties better fitted, both in latent and data space. These well-fitted uncertainties have several practical downstream applications, which we demonstrate next. 

\textbf{Out-of-Distribution (\textsc{OoD}) Detection} capabilities are critical for identifying distributional shifts, outliers, and irregular user inputs, which can hinder the propagation of erroneous decisions in an automated system. We evaluate \textsc{OoD} performance on the commonly used benchmarks \citep{nilisnick2019knownothing}, where we use \textsc{FashionMnist} \citep{xiao2017fashionmnist} as in-distribution and \textsc{mnist} \citep{lecun1998mnist} as \textsc{OoD}. 
\cref{fig:roccurves} (c) shows that our online \textsc{lae} outperforms existing models in both log-likelihood and Typicality score \citep{nalisnick2019typicality}. This stems from the calibrated model uncertainties, which are exemplified in the models ability to detect \textsc{OoD} examples from the uncertainty deduced in latent and output space; see \cref{fig:roccurves} (a,b) for \textsc{roc} curves.

\cref{fig:ood_histograms} shows distribution of the output variances for in- and \textsc{OoD} data. This illustrates that using \textsc{la} improves OoD detection. Furthermore, the online training improves the model calibration.

\begin{figure}[h]
    \centering
    \includegraphics[width=1\textwidth]{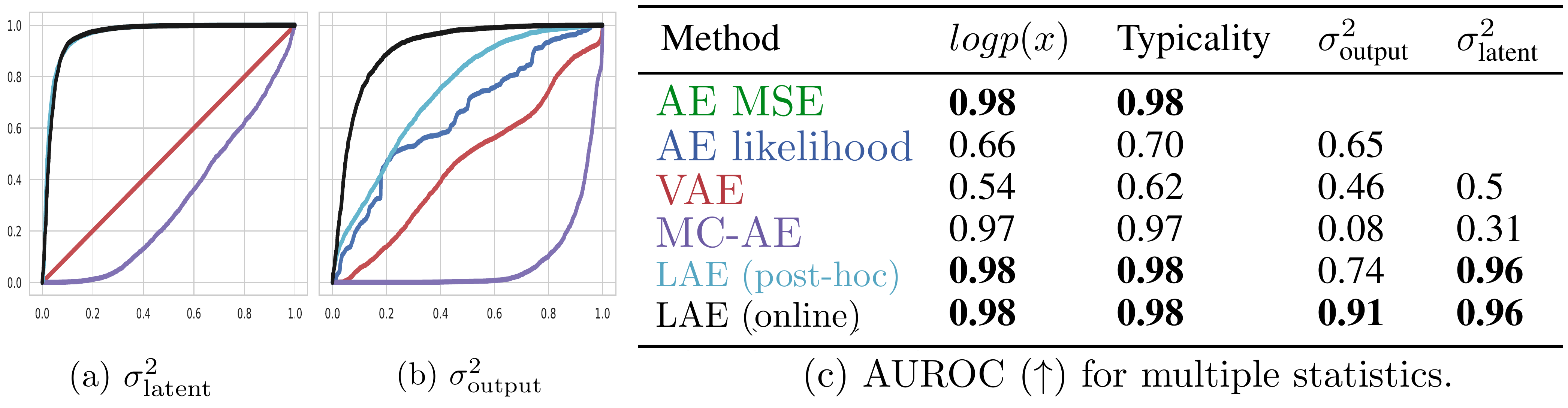}
    \caption{\textbf{Out of Distribution detection.} In-distribution data FashionMnist and \textsc{OoD} data Mnist. (a) and (b) shows the ROC curves for latent and output space uncertainties. (c) shows (AUROC $\uparrow$) for log-likelihood, typicality score, latent space $\sigma_{\text{latent}}$ and output space $\sigma_{\text{output}}$ uncertainties. Online \textsc{lae} is able to discriminate between in and \textsc{OoD} using the deduced variances in latent and output space.}
    \vspace{-0.5cm}
    \label{fig:roccurves}
\end{figure}

\begin{figure}[b]
    \centering
    \includegraphics[width=0.8\textwidth]{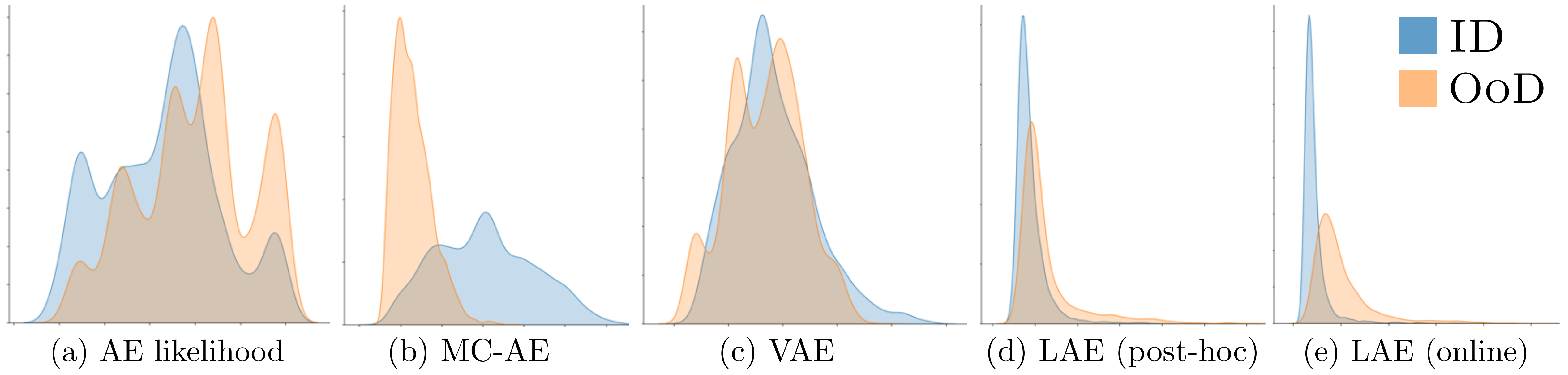}
    \caption{\textbf{Histograms of variance} for in- and \textsc{OoD} reconstructions in output space. Note that MC-AE separates the distributions well, but the model \update{assigns} higher variance to the in- than \textsc{OoD} data.\looseness=-1}
    \vspace{-0.6cm}
    \label{fig:ood_histograms}
\end{figure}

\textbf{Missing Data imputation.} Another application of stochastic representation learning is to provide distributions over unobserved values \citep{rezende2014vae}. In many application domains, sensor readings go missing, which we may mimic by letting parts of an image be unobserved. \citet{rezende2014vae} show that we can then draw samples from the distribution of the entire image conditioned on the observed part, by imputing the missing pixels with noise and repeatedly encode and decode while keeping observed pixels fixed. \cref{fig:data_imputation} show samples using this procedure from a \textsc{vae}, a post-hoc \textsc{lae} and our online \textsc{lae}, where we only observe the lower half of an \textsc{mnist} image. This implies ambiguity about the original digit, e.g.\@ the lower half of a ``5`` could be a ``3`` and similarly a ``7`` could be a ``9``. Our \textsc{lae} captures this ambiguity, which is exemplified by the multi-modal reconstructions from the sampled networks in \cref{fig:data_imputation}. The baselines only capture unimodal reconstructions.

Capturing the ambiguity of partly missing data can improve downstream tasks such as the calibration of an image classifier. In \cref{fig:data_imputation} (c) we demonstrate how averaging the predictions of a simple classifier across reconstructions improves standard calibration metrics. This is because the classifier inherits the uncertainty and ambiguity in the learned representations. A deep ensemble of \textsc{ae}s \update{performs} similarly to ours, but comes at the cost of training and storing multiple models. 

When the entire input image is missing, the imputation procedure can be seen as a sampling mechanism, such that our \textsc{lae} can be viewed as a generative model. The bottom rows in \cref{fig:data_imputation} show that the \textsc{lae} indeed does generate sharp images from a multi-modal distribution.

\begin{figure}
    \centering
    \includegraphics[width=\textwidth]{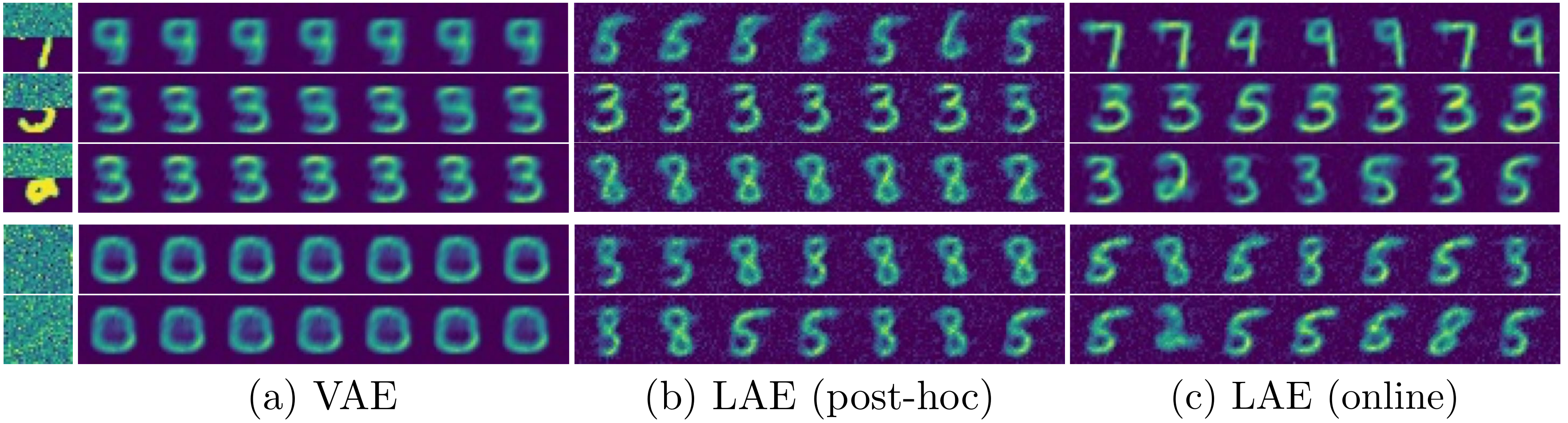}
    \vspace{-3mm}
    \caption{\textbf{Missing data imputation \& generative capabilities.} Online training of \textsc{lae} improves representational robustness. This is exemplified by the multimodal behavior in the data imputation (top rows) that accurately model the ambiguity in the data. The bottom two rows \update{show} that the \textsc{lae} is able to generate crisp digits from random noise.}
    \label{fig:data_imputation}
\end{figure}

\begin{table}[]
    \centering
    \begin{tabular}{l|llllll}
    \toprule 
        Method          &  MSE $\downarrow$  &$\log p(x) \uparrow$ & Acc. $\uparrow$ & ECE $\downarrow$  & MCE  $\downarrow$ &   RMSCE $\downarrow$ \\ \midrule
        Classifier      &       &           &  \bf{0.53} &  0.16 & 0.25 & 0.18     \\
        VAE             & 104.73 & -104.75   & 0.22  &  0.18 &  0.34 &   0.19  \\
        MC-AE           &  106.05 & -106.05 & 0.45       &  0.28 & 0.38 & 0.29     \\
        AE Ensemble     &  \bf{96.23} & \bf{-100.94} & \bf{0.53}      &  \bf{0.12} & 0.2 & \bf{0.13}         \\
        LAE (post-hoc)     & 101.62 & -107.25 & 0.51    &  0.16 & 0.27  &    0.18 \\
        LAE (online)      & 99.59 & -106.29    & \bf{0.53}  & \bf{0.12}& \bf{0.16} & \bf{0.13} \\
    \bottomrule
    \end{tabular}
    \caption{Reconstruction quality measured by the MSE and log-likelihood for the data imputation. Our well-calibrated uncertainties propagates to the \textsc{mnist} classifier and improves the calibration metrics ECE, MCE and RMSCE.\looseness=-1}
    \label{tab:data_imputation}
    \vspace{-0.8cm}
\end{table}

\begin{minipage}[H]{\textwidth}
\begin{minipage}{0.45\textwidth}
    \includegraphics[width=1\textwidth]{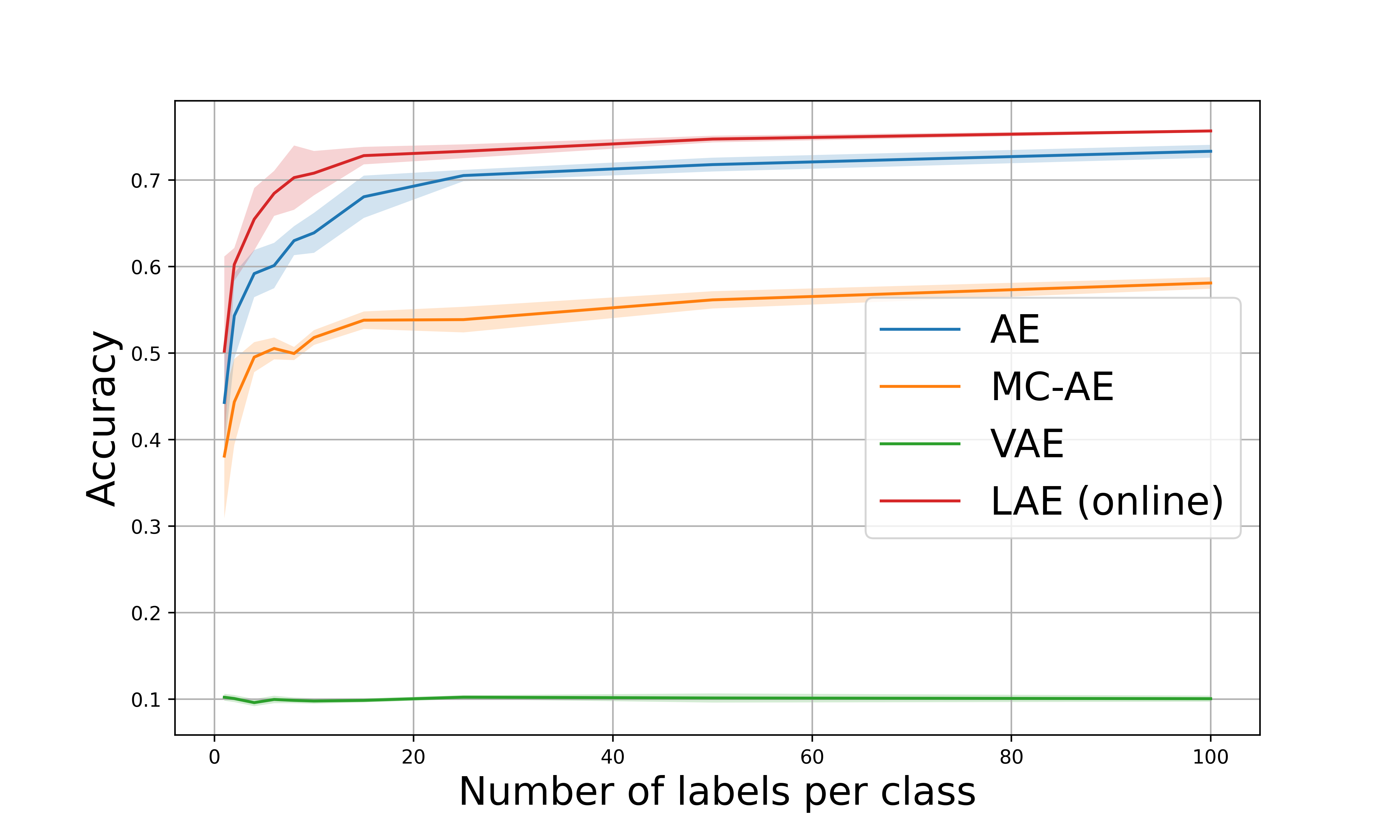}
    \captionof{figure}{Accuracy as an function of the number of labels per class on \textsc{mnist}\looseness=-1. 
    }
    \label{fig:semi_mnist}
\end{minipage}
\hspace{5mm}
\begin{minipage}{0.50\textwidth}
    \resizebox{\textwidth}{!}{\begin{tabular}{l|llll}
    \toprule
        Attribute & AE & VAE & MC-AE & LAE$^*$ \\ \midrule
        Arched Eyebrows  & 0.50 & 0.52 & 0.55 & 0.60\\
        Attractive       & 0.52 & 0.50 & 0.49 & 0.53 \\  
        Bald             & 0.98 & 0.98 & 0.98 & 0.98 \\
        Wearing Lipstick & 0.52 & 0.49 & 0.50 & 0.54 \\ 
        Heavy Makeup     & 0.45 & 0.52 & 0.49 & 0.56 \\ \midrule
        Overall          & 0.73 & 0.72 & 0.73 & \textbf{0.74} \\
    \bottomrule
    \end{tabular}}
    \captionof{table}{Semi-supervised classification accuracy on \textsc{CelebA} using only $10$ labeled datapoints. $^*$ refers to online LAE.}
    \label{tab:semi_celeba}
\end{minipage}
\end{minipage}

\begin{figure}
     \centering
     \begin{subfigure}[b]{0.19\textwidth}
         \centering
         \includegraphics[width=\textwidth]{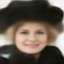}
     \end{subfigure}
     \begin{subfigure}[b]{0.19\textwidth}
         \centering
         \includegraphics[width=\textwidth]{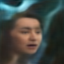}
     \end{subfigure}
     \begin{subfigure}[b]{0.19\textwidth}
         \centering
         \includegraphics[width=\textwidth]{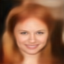}
     \end{subfigure}
     \begin{subfigure}[b]{0.19\textwidth}
         \centering
         \includegraphics[width=\textwidth]{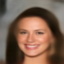}
     \end{subfigure}
     \begin{subfigure}[b]{0.19\textwidth}
         \centering
         \includegraphics[width=\textwidth]{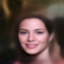}
     \end{subfigure}
     
     \begin{subfigure}[b]{0.19\textwidth}
         \centering
         \includegraphics[width=\textwidth]{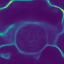}
     \end{subfigure}
     \begin{subfigure}[b]{0.19\textwidth}
         \centering
         \includegraphics[width=\textwidth]{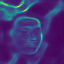}
     \end{subfigure}
     \begin{subfigure}[b]{0.19\textwidth}
         \centering
         \includegraphics[width=\textwidth]{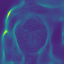}
     \end{subfigure}
     \begin{subfigure}[b]{0.19\textwidth}
         \centering
         \includegraphics[width=\textwidth]{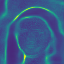}
     \end{subfigure}
     \begin{subfigure}[b]{0.19\textwidth}
         \centering
         \includegraphics[width=\textwidth]{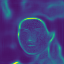}
     \end{subfigure}
    \caption{\update{\textbf{Sample reconstructions on \textsc{CelebA}.} The top row shows the mean reconstruction and the bottom row shows the variance of the reconstructed images.}}
    \vspace{-0.5cm}
    \label{fig:celeba_samples}
\end{figure}

\textbf{Semi-supervised learning} combines a small amount of label data with a large amount of unlabeled data. The hope is that the  structure in the unlabeled data can be used to infer properties of the data that cannot be extracted from a few labeled points. Embedding the same labeled data point multiple times using a stochastic representation scales up the amount of labeled data that is available during training.\looseness=-1

\cref{fig:semi_mnist} shows the accuracy of a $K$-nearest neighbor classifier trained on different amounts of labeled data from the \textsc{mnist} dataset. For all models with a stochastic encoder, we encode each labeled datapoint $100$ times and repeat the experiment $5$ times. When only a few labels per class are available ($1$-$20$) we clearly observe that our \textsc{lae} model outperforms all other models, stochastic and deterministic. Increasing the number of labels beyond $100$ per class makes the \textsc{ae} and \textsc{lae} equal in their classification performance with the \textsc{ae} model eventually outperforming the \textsc{lae} model. 

In \cref{tab:semi_celeba} we conduct a similar experiment on the \textsc{CelebA} \citep{liu2015faceattributes} facial dataset, where the the task is to predict $40$ different binary labels per data point. When evaluating the overall accuracy of predicting all $40$ facial attributes, we see no significant difference in performance. However, when we zoom in on specific facial attributes we gain a clear performance advantage over other models. \update{\cref{fig:celeba_samples} shows the mean and variance of five reconstructed images. The online \textsc{lae} produces well-calibrated uncertainties in the output space and scales to large images.}

\textbf{Limitations.}~~~Empirically, the \textsc{lae} improvements are more significant for overparameterized networks. The additional capacity seems to help the optimizer find a local mode where a Gaussian fit is appropriate. It seems the regularization induced by marginalizing $\thetab$ compensates for the added flexibility.\looseness=-1 
\section{Conclusion}
In this paper, we have introduced a Bayesian autoencoder that is realized using Laplace approximations. Unlike current models, this Laplacian autoencoder produces well-behaved uncertainties in both latent and data space. We have proposed a novel variational lower-bound of the autoencoder evidence and an efficient way to compute its Hessian on high dimensional data that scales linearly with data size. Empirically, we demonstrate that our proposed model predicts reliable stochastic representations that are useful for a multitude of downstream tasks: out-of-distribution detection, missing data imputation, and semi-supervised classification. 
Our work opens the way for fully Bayesian representation learning where we can marginalize the representation in downstream tasks. We find this to consistently improve performance.\looseness=-1 

\begin{ack}
This work was supported by research grants (15334, 42062) from VILLUM FONDEN. This project has also received funding from the European Research Council (ERC) under the European Union's Horizon 2020 research and innovation programme (grant agreement 757360). This work was funded in part by the Novo Nordisk Foundation through the Center for Basic Machine Learning Research in Life Science (NNF20OC0062606).
\end{ack}

\newpage
\appendix
\begin{center}
    \Large{\textbf{--- Supplementary Material ---}}
\end{center}
The supplementary material is organized as follows. First, we give more technical details on the experiments. Second, we discuss the Laplace approximation and the ``mean shift'' issue encountered in non-local maxima. Third, we elaborate more on the difference between 
related works and our proposed method. Fourth, we present a much more thorough explanation of the proposed method accompanied by relevant proofs. Fifth, we present more details on the hessian derivations.

\section{Experimental details}~\label{sec:experiment_appendix}
All experiments was conducted on one of the following datasets: \textsc{mnist} \citep{lecun1998mnist}, \textsc{fashionmnist} \citep{xiao2017fashionmnist} and \textsc{celeba} \citep{liu2015faceattributes}. For training and testing the default splits were used. For validation, we sampled 5000 data points randomly from the training sets. All images were normalized to be in the $[0,1]$ range and for the \textsc{celeba} dataset the images were resized to $64 \times 64$ additionally.

If nothing else is stated, the models were trained with the following configuration: we used Adam optimizer \citep{kingma2015adam} with a learning rate of 0.001 and default Pytorch settings \citep{paszke2019pytorch}. The learning rate was adjusted using \textit{ReduceLROnPlateau} learning rate scheduler with parameters \textit{factor=0.5} and \textit{patience=5}, meaning that the learning rate was halved whenever 5 epochs had passed where the validation loss had not decreased. The mean squared error loss was used as the reconstruction loss in all models. Models were trained until convergence, defined as whenever the validation loss had not decreased for 8 epochs. Models trained on \textsc{mnist} and \textsc{fashionmnist} used a batch size of 64 and on \textsc{celeba} a batch size of 128 was used.

Model-specific details:
\begin{itemize}
    \item \textsc{vae}: Models were trained with KL-scaling of 0.001. We use two encoders and two decoders, such that the model has twice the number of parameters compared to the other models.
    \item \textsc{mc-ae}: Models were trained with dropout between all trainable layers with probability $p=0.2$. We keep the same dropout rate during testing.
    \item \textsc{emsemble-ae}: Each ensemble consists of $5$ models, each initialized with a different random seed.
    \item \textsc{lae (posthoc)}: For experiments with linear layers we used the Laplace Redux~\citep{daxberger2021laplace} implementation. For convolutions, we found it necessary to use our proposed hessian approximation. We use a diagonal approximation of the hessian in all experiments. After fitting the hessian, we optimize for the prior precision using the marginal likelihood~\citep{daxberger2021laplace}. We use $100$ MC sampling in all experiments.   
    \item \textsc{lae (online)}: We use the exact diagonal in experiments with linear layers and the mixed diagonal approximation in all experiments with convolutional layers. We use a hessian memory factor of $0.0001$ and sample only $1$ network per iteration. We found that it was not necessary to optimize for the prior precision when trained online.
\end{itemize}

\subsection{Hessian approximation}
We use a linear encoder-decoder with three layers in the encoder and decoder, \textsc{tanh} activation functions, and latent size $2$. We choose this architecture as Laplace Redux~\citep{daxberger2021laplace} supports various hessian approximations for this simple network. We use Laplace Redux for all post-hoc experiments except for the approximate diagonal hessian. 

\subsection{Out-of-distribution}
We use a convolutional encoder-decoder architecture. The encoder consisted of a \textsc{conv2d}, \textsc{tanh}, \textsc{maxpool2d}, \textsc{conv2d}, \textsc{tanh}, \textsc{maxpool2d}, \textsc{linear}, \textsc{tanh}, \textsc{linear}, \textsc{tanh}, \textsc{linear}, where the decoder was mirrored but with nearest neighbour Upsampling rather than \textsc{maxpool2d}. We used a latent size of $2$ in these experiments for all models. 

\subsection{Missing data imputation}
To elaborate on the procedure, we reconstruct $5$ samples from the half/fully masked image. For each of these reconstructions, we make $5$ more reconstructions and take the average of these reconstructions. The intuition is that the first stage explores the multi-modal behavior of the reconstructions. In the second stage, the uncertainty of the reconstructed digit is reduced, and each sample will reconstruct the same modality. By averaging over these modalities, we achieve a more crisp reconstruction. We use the same architecture as in the hessian approximation experiment. 

\subsection{Semi-supervised learning}
For the experiments on \textsc{mnist}, we use a single convergence checkpoint for each model. We use the same model architecture as in the hessian approximation. We did 5 repetitions for each model where we first sampled $n$ labels from each of the $10$ classes from the validation set, then embedded the $10\times n$ data points into latent space, trained a \textsc{knn} classifier on the embedded points and finally evaluated the accuracy of the classifier on the remaining validation set. This procedure was repeated for different values of $n$ in the $[1,100]$ range. For the stochastic encoders (\textsc{vae}, \textsc{mc-ae}, \textsc{lae}), we repeated the embedding step $100$ times with the goal that the uncertainty could help the downstream classification. For the \textsc{knn} classifier we use cross-validation ($K=2$) to find the optimal number of nearest neighbors.

For the experiments on \textsc{celeba} we repeated the exact same experiments but with a fixed value if $n=10$. Additionally, the classifier was changed to a multi-label version \textsc{knn}-classifier to accommodate the multiple binary features in the dataset. For \textsc{celeba} we use a convolutional architecture. The encoder consists of $5$ convolutional layers with \textsc{tanh} and \textsc{maxpool2d} in between each parametric layer. We use a latent size of $64$. The decoder mirrors the encoder, but we replace \textsc{maxpool2d} with nearest neighbor upsampling. 

\section{Laplace Approximation}\label{sec:laplace_approx_appendix}
Laplace approximation is an operator that maps local properties (derivatives) to global properties. The idea is to infer a density on every point based on the curvature in a single point. This is done through a Taylor expansion.

Given a vectorial space $\Theta$ of size $D$, let $P(\thetab)$ be an arbitrary distribution on $\Theta$ and let $\thetab^*\in\Theta$ be an arbitrary point. Consider the second order Taylor expansion of the log density around $\thetab^*$
\[
\ln P(\thetab) = \ln P(\thetab^*) + \nabla_\thetab\ln P (\thetab^*)(\thetab-\thetab^*) + \frac{1}{2}(\thetab-\thetab^*)^\top \nabla^2_\thetab\ln P (\thetab^*)(\thetab-\thetab^*) + \mathcal{O}(\|\thetab-\thetab^*\|^3),
\]
where
\[
[\nabla_\thetab\ln P (\thetab^*)]_i = \frac{\partial}{\partial \thetab_i}
\ln P(\thetab)\bigg\rvert_{\thetab=\thetab^*}
\qquad
[\nabla^2_\thetab\ln P (\thetab^*)]_{ij} = \frac{\partial^2}{\partial \thetab_i \partial \thetab_j}
\ln P(\thetab)\bigg\rvert_{\thetab=\thetab^*}.
\]
are the first and second-order derivatives.

Note that if $P(\thetab)$ is a Gaussian, then its log density is a second order polynomial and the second order Taylor expansion is exact. This implies that starting from a Gaussian, the Laplace approximation can infer the full exact density just from the values of $\nabla_\thetab\ln P$ and $\nabla^2_\thetab\ln P$ in a single point $\thetab^*$.

The main takeaway from the Laplace approximation is the strong, intrinsic, tie between the covariance matrix and the negative inverse of the hessian of the log probability: $-(\nabla^2_\thetab\ln P (\thetab^*))^{-1}$.

\subsection{If $\thetab^*$ is a local maxima}
If $\nabla_\thetab\ln P(\thetab^*)=0$ the Taylor expansion consists of only two terms and the Laplace derivation is easier. We also present this in order to develop intuition, although this case is a subcase of the non-local maxima case. 
In the next section, we will consider the more general setting.

Define the Gaussian
\begin{equation}
    \textsc{laplace}_{\text{max}}(\thetab^*;P) 
    := \mathcal{N}
    \left(\thetab|
        \mu=\thetab^*,
        \sigma^2=-(\nabla^2_\thetab\ln P (\thetab^*))^{-1}
    \right),
\end{equation}
which has density
\[
Q(\thetab) := \frac{P(\thetab^*)}{Z_Q} e^{\frac{1}{2}(\thetab-\thetab^*)^\top (\nabla^2_\thetab\ln P (\thetab^*))^{-1}(\thetab-\thetab^*)},
\]
where $Z_Q=P(\thetab^*)\sqrt{ (-2\pi)^{D} \det(\nabla^2_\thetab\ln P (\thetab^*))}$ is the normalizing constant. Then, $Q(\thetab)$ is a good approximation of $P(\thetab)$ in the sense that
\begin{align*}
    \ln Q(\thetab) + \ln Z_Q
    & = \ln P(\thetab^*) + \frac{1}{2}(\thetab-\thetab^*)^\top (\nabla^2_\thetab\ln P (\thetab^*))^{-1}(\thetab-\thetab^*) \cong \\
    & \cong \ln P(\thetab^*) + \frac{1}{2}(\thetab-\thetab^*)^\top (\nabla^2_\thetab\ln P (\thetab^*))^{-1}(\thetab-\thetab^*)  + \mathcal{O}(\|\thetab-\thetab^*\|^3) = \\
    & = \ln P(\thetab).
\end{align*}
Notice that $\thetab^*$ is in a local maximum, which ensures that the hessian $\nabla^2_\thetab\ln P (\thetab^*)$ is negative semi-definite. This in turn ensures that the normalizing constant $Z_Q$ exists and that $\textsc{laplace}_{\text{max}}(\thetab^*;P)$ is well defined.

\subsection{If $\thetab^*$ is not a local maxima}
In order to proceed to a similar derivation when $\nabla_\thetab\ln P(\thetab^*)\not=0$, we first rearrange the terms in the Taylor expansion. For a more compact notation, we write $\nabla\ln P$ instead of $\nabla_\thetab\ln P(\thetab^*)$ and  $\nabla^2\ln P$ instead of $\nabla^2_\thetab\ln P(\thetab^*)$.
\begin{align*}
    \ln P(\thetab)
    & \cong \ln P(\thetab^*) + \nabla\ln P(\thetab-\thetab^*) + \frac{1}{2}(\thetab-\thetab^*)^\top \nabla^2\ln P(\thetab-\thetab^*) = \\
    & = \ln P(\thetab^*) - \frac{1}{2}\nabla\ln P^\top \nabla^2\ln P^{-1} \nabla\ln P \\ 
    & \qquad + \frac{1}{2}(\thetab-\thetab^* + \nabla^2\ln P^{-1} \nabla\ln P)^\top \nabla^2\ln P(\thetab-\thetab^* + \nabla^2\ln P^{-1} \nabla\ln P) = \\
    & = \ln P(\thetab^*) - \frac{1}{2}\nabla\ln P^\top \nabla^2\ln P^{-1} \nabla\ln P + \frac{1}{2}(\thetab-\thetab^*_1)^\top \nabla^2\ln P(\thetab-\thetab^*_1 ),
\end{align*}
where we define the new point $\thetab^*_1$ as 
\begin{equation}
    \thetab^*_1 
    = \thetab^* - (\nabla^2_\thetab\ln P(\thetab^*))^{-1} \nabla_\thetab\ln P(\thetab^*).
\end{equation}
Define the Gaussian
\begin{equation}\label{eq:laplace_definition}
    \textsc{laplace}(\thetab^*;P) 
    := \mathcal{N}
    \left(\thetab|
        \mu=\thetab^*_1,
        \sigma^2=-(\nabla^2_\thetab\ln P(\thetab^*)^{-1}
    \right),
\end{equation}
which has density
\[
Q(\thetab) := \frac{P(\thetab^*)}{Z_Q}
e^{- \frac{1}{2}\nabla\ln P^\top \nabla^2\ln P^{-1} \nabla\ln P} 
e^{\frac{1}{2}(\thetab-\thetab^*_1)^\top (\nabla^2_\thetab\ln P (\thetab^*))^{-1}(\thetab-\thetab^*_1)},
\]
where $Z_Q=P(\thetab^*) e^{- \frac{1}{2}\nabla\ln P^\top \nabla^2\ln P^{-1} \nabla\ln P}  \sqrt{ (-2\pi)^{D} \det(\nabla^2_\thetab\ln P (\thetab^*))}$ is the normalizing constant. Then, $Q(\thetab)$ is a good approximation of $P(\thetab)$ in the sense that
\begin{align*}
    \ln Q(\thetab) + \ln Z_Q 
    & = \ln P(\thetab^*) - \frac{1}{2}\nabla\ln P^\top \nabla^2\ln P^{-1} \nabla\ln P + \frac{1}{2}(\thetab-\thetab^*_1)^\top \nabla^2\ln P(\thetab-\thetab^*_1 ) = \\
    & = \ln P(\thetab^*) + \nabla\ln P(\thetab-\thetab^*) + \frac{1}{2}(\thetab-\thetab^*)^\top \nabla^2\ln P(\thetab-\thetab^*) \cong \\
    & \cong \ln P(\thetab^*) + \nabla\ln P(\thetab-\thetab^*) + \frac{1}{2}(\thetab-\thetab^*)^\top \nabla^2\ln P(\thetab-\thetab^*) +  \mathcal{O}(\|\thetab-\thetab^*\|^3) = \\
    & =\ln P(\thetab).
\end{align*}
We highlight four points: 

(1) In order to ensure that the normalizing constant $Z_Q$ exists and consequently that $\textsc{laplace}(\thetab^*;P)$ is well defined, the hessian $\nabla^2_\thetab\ln P (\thetab^*)$ must be negative semi-definite. Differently from the locally maximal case, this is no longer guaranteed. 

(2) The method is numerically unstable if $\nabla^2_\thetab\ln P (\thetab^*)$ has eigenvalues close to 0. This has been empirically observed to commonly be the case \citep{sagun2016eigenvalues}. 

(3) We emphasize that the Taylor expansion is accurate around $\thetab^*$, but the Laplace Gaussian is centered in $\thetab^*_1$. This is often referred to as ``mean shift'' and implies that the sampled parameters from the normal distribution over weights are sampled far away from the actual mean. 

(4) Now, assume $\ln P(\thetab)$ to be the composition of a function $f(\thetab)$ and a loss $l(f)$. If, as in our setting, $f$ is linear and $l$ is concave, then $\ln P(\thetab)$ is concave, its hessian is guaranteed to be negative semi-definite, and the Laplace approximation is well-defined.

\section{Extended related work}\label{sec:extended_related_works}
Our proposed method's update rule resembles different methods in the literature. This is rather interesting since, despite arriving at a similar algorithm, they follow different derivations. In this section, we seek to explain the nuances between existing methods and ours. 

It is useful to recall our precision update rule
\begin{equation}
    \Hb_{t+1}
    =
    (1-\alpha) \Hb_t
    + 
    J_\thetab f^t_\cdot(\xc)^\top
    \Sigma_\text{data}^{-1}
    J_\thetab f^t_\cdot(\xc)
    \qquad
    \Hb_0 = \Sigma_\text{prior}^{-1}.
\end{equation}

To the best of our knowledge, a key conceptual difference with all related works in the literature is \textbf{Exact vs.\@ Approximate Hessian:} The second term on the RHS, $J^\top \Sigma^{-1} J$ is identical in all the update formulations but comes from a different derivation. In the formulation of \citet{daxberger2021laplace} (and others) this is an approximation of the hessian, that \emph{happens to guarantee negative definiteness}. In our formulation, this term comes from the linearization. This term is the exact hessian, i.e.\@ no approximation, such that negative definiteness is implied by the linearization. This results in a more intuitive understanding of the linearization error.  

\subsection{Differences with Laplace Redux}

\citet{daxberger2021laplace} specifically develop a post-hoc method, which assumes access to a MAP parameter, nevertheless, they also present an online training scheme. This appears to be grounded in ideas from second-order optimization, while ours is closer linked to the probabilistic model.

Neglecting their prior optimization procedure, \citet{daxberger2021laplace} considers
\begin{equation}
    \Hb_{t+1}
    =
    \Sigma_\text{prior}^{-1}
    + 
    J_\thetab f^t_\cdot(\xc)^\top
    \Sigma_\text{data}^{-1}
    J_\thetab f^t_\cdot(\xc).
\end{equation}
We highlight two main differences.

\textbf{Iteratively Updating the Hessian vs.\@ Using an Uninformed Prior:} In our formulation the first term of the RHS is the previous precision, which we ``discount'' with a forgetting term $\alpha$ that comes from the error inherited by moving away from the previous linearization. In the formulation of \citet{daxberger2021laplace} this is a prior that is assumed to be in the form $\gamma^2\mathbb{I}$, where the scalar $\gamma$ is optimized at every step through maximization of the evidence (Eq.~6 in their paper). This evidence maximization comes with some drawbacks: (1) It tends to make the Laplace approximation overconfident to outliers. They partially address this issue by adding to the evidence an auxiliary term that depends on an \textsc{ood} dataset, penalizing it (Eq.~12 in their Appendix). (2) Besides inducing the avoidable need for an \textsc{ood} dataset, this technique has the same pitfall as \textsc{vae}s, namely uncertainty should be a derived quantity, not a learned one. 

\textbf{Computing the hessian at every iteration vs.\@ every epoch:} We update the hessian estimate every iteration. In practice, \citet{daxberger2021laplace}  updates the hessian every epoch. In principle, they could update the hessian more regularly, but to the best of our knowledge, they do not explore this path.







\subsection{Differences with Variational Adaptive Newton Method}
\citet{Khan2017Variationaladaptivenewton} propose the following precision update rule
\begin{equation}
    \Hb_{t+1}
    =
    \Hb_t
    + 
    \alpha J_\thetab f^t_\cdot(\xc)^\top
    \Sigma_\text{data}^{-1}
    J_\thetab f^t_\cdot(\xc).
\end{equation}
The updates are very similar to ours, where the only difference is that the scaling is made on the Jacobian product instead of on the previous precision $\Hb_t$. Both their and our updates are ``additive'' in the sense that the magnitude is increasing, for them strictly monotonically, for us on average depending on the magnitude of $\alpha$. Thus, in both cases, the variance will approach 0 in the limit, modeling epistemic uncertainty disappearing for infinitely long training.

Their update rule comes from the Variational Optimization setting, which can be viewed as an instance of Variational Inference neglecting the KL term. They highlight strong similarities with Newton's method.

\subsection{Differences with Noisy Natural Gradient}
\citet{Zhang2017NoisyNaturalGradient} propose the precision update rule
\begin{equation}
    \Hb_{t+1}
    =
    (1-\alpha)\Hb_t
    + 
    \alpha J_\thetab f^t_\cdot(\xc)^\top
    \Sigma_\text{data}^{-1}
    J_\thetab f^t_\cdot(\xc).
\end{equation}
Again, the update rule is very similar to ours, besides the scaling being applied to the Jacobian product. They use a convex sum, which makes the update rule ``norm preserving''. Thus, these updates have very different asymptotic behavior than ours. 

Their update rule comes from applying, in the context of Variational Inference, natural gradient to the variational posterior distribution, instead of directly on the parameters space. The natural gradient is deeply connected with \textsc{lae} and natural parameters highlight the importance of updating the precision matrix instead of the covariance.

They also extend their derivation to the KFAC approximation of the hessian (in place of the exact diagonal), this is made through the use of matrix variate Gaussian. Despite not being considered in this work, a similar derivation is in principle feasible in our setting.

\subsection{Connection with Adam}

Both \citet{Zhang2017NoisyNaturalGradient} and \citet{Khan2017Variationaladaptivenewton} highlight strong similarities with the Adam method~\citep{KingmaB14adam}. We share these similarities and we highlight them too. The ``connection point'' is a noisy version of Adam. \citet{Zhang2017NoisyNaturalGradient} describe this method and call it ``Noisy Adam'' (Algorithm 1 in their paper). The difference from the vanilla version is that at each step, instead of using the current parameter, they use a noisy version of it. The noise magnitude is the pseudo-second-order term ($v_t$ in the original Adam paper~\citep{kingma2015adam}). 

We can then interpret Noisy Adam as an instance of our variational setting, specifically where the expectation estimate $\mathbb{E}_{q(\theta)}[\cdot]$ is made through Monte Carlo estimation with $N=1$ samples. Having the methods in the same setting, we can compare them and highlight the two main differences.

First, we emphasize the difference is in the second order term. Similarly to \citet{Zhang2017NoisyNaturalGradient} and \citet{Khan2017Variationaladaptivenewton}, we use the diagonal of the hessian (technically the diagonal of the GGN), while Adam uses the pointwise square of the gradient, which they call the second raw moment and is intended as a cheaply computable approximation of the Hessian.

Another difference is that Adam applies the square root. This is a minor point since, as pointed out by \citet{Zhang2017NoisyNaturalGradient}, this change may affect optimization performance, but does not change the fixed points.

\subsection{Connection with Bayes by Backpropagation}
Bayes by Backprop~\citep{blundell2015bayesbybackprop} can be viewed as a sample-based approximation of Laplace. In order to show this relation, we recall two very powerful equations \citep{Opper09VariationalGaussianApproximationRevisited}. Specifically, let $\mu$, $\Sigma$ be the parameter of a Gaussian distribution $q(\theta)\sim\mathcal{N}(\mu,\Sigma)$, and let $V(\theta)$ be an arbitrary $L^2$ integrable function (the log-likelihood in our case). Then, we are interested in the derivatives of $\mathbb{E}_{\theta\sim q}[V(\theta)]$. By standard Fourier analysis and integration by part, we have
\begin{equation}
    \nabla_\mu \mathbb{E}_{\theta\sim q}[V(\theta)]
    =
    \mathbb{E}_{\theta\sim q}[\nabla_\theta V(\theta)],
\end{equation}
\begin{equation}
    \nabla_\Sigma \mathbb{E}_{\theta\sim q}[V(\theta)]
    =
    \frac{1}{2}\mathbb{E}_{\theta\sim q}[\nabla^2_\theta V(\theta)].
\end{equation}
While the first equation is somehow trivial, the second highlight a very deep relationship. The LHS can be rewritten as
\begin{equation}
     \nabla_\Sigma \mathbb{E}_{\theta\sim q}[V(\theta)]
     =
     \nabla_\Sigma \mathbb{E}_{\epsilon\sim \mathcal{N}(0,1)}[V(\mu + \epsilon\Sigma)]
     =
     \mathbb{E}_{\epsilon\sim \mathcal{N}(0,1)}[ \nabla_\Sigma V(\mu + \epsilon\Sigma)].
\end{equation}
We can recognize that inside the expectation on the RHS is exactly the update rule for the variance in the bayes by backprop method~\citep{blundell2015bayesbybackprop}. Thus, we can interpret bayes by backprop as a one-sample Monte Carlo estimation of the expected value. The equations shows that in the limit of infinite samples, the bayes by backprop update step $\nabla_\Sigma V(\mu + \epsilon\Sigma)$ converges to the (averaged) Laplace approximation step $\mathbb{E}_{\theta\sim q}[\nabla^2_\theta V(\theta)]$ up to a factor 2. This motivates both the power and the instability of bayes by backprop.

\section{Model}\label{sec:model_appendix}

\subsection{Overview}
Let $\mathbb{X}=\mathbb{R}^D$ be the \emph{data} space, let $\mathbb{Y}=\mathbb{R}^D$ be the \emph{reconstruction} space and let $\Theta=\mathbb{R}^P$ be the \emph{parameter} space.
Let $\mathbb{F} = \left(\Theta \rightarrow \left(\mathbb{X}\rightarrow\mathbb{Y}\right)\right)$ be the space of operators from $\Theta$ to the space of operators from $\mathbb{X}$ to $\mathbb{Y}$. We will denote a function $f\in\mathbb{F}$ applied to a parameter $\thetab\in\Theta$ as $f_\thetab:\mathbb{X}\mapsto\mathbb{Y}$. This will represent, for example, a \textsc{nn} $f$ with a specific set of parameter (weights) $\thetab\in\Theta$, that maps some data $\xc\in\mathbb{X}$ to some reconstruction $f_\thetab(\xc) = \yc \in \mathbb{Y}$.

Let $\mathbb{X}\times\mathbb{Y}\times\Theta\times\mathbb{F}$ be a probability space. The only assumption we make on this space is
\begin{equation}\label{eq:reconstruction_hp}
    p(\yc|\xc,\thetab,f) \sim \mathcal{N}(\yc|\mu=f_\thetab(\xc), \sigma^2=\Sigma)
    \qquad
    \forall (\xc,\thetab,f)\in\mathbb{X}\times\Theta\times\mathbb{F}
\end{equation}
where $\Sigma\in\mathfrak{M}(\mathbb{R}^D\times\mathbb{R}^D)$ is a \emph{fixed} variance matrix. This is a common assumption for regression tasks, and is sometimes referred to as the ``data noise'' or ``reconstruction error''. In this paper, we fix $\Sigma=\mathbb{I}$, but the derivations hold for the general case. With only this assumption, the distribution is undefined and multiple solutions can exist. Thus, we require more assumptions.

A dataset $\mathcal{D}=\{\xc_n\}$ is a finite of infinite collection of data $\xc_n\in\mathbb{X}$ that is assumed to follow a certain, fixed but unknown, distribution
\begin{equation}
    \xc_n \sim p(\xc).
\end{equation}

Sacrificing slim notation for the sake of clarity, we introduce an operator $\mathcal{I}:\mathbb{X}\mapsto\mathbb{Y}$. This represents the ideal reconstruction for a given $\xc$. In the standard supervised setting, this would be the operator (defined on the dataset only) that maps each data input to its label. In our unsupervised setting, where $\mathbb{X}$ and $\mathbb{Y}$ are \emph{the same space}, the operator $\mathcal{I}$ is simply the identity (Indeed they are \emph{not} the same space, they are isomorphic spaces that we identify through the operator $\mathcal{I}$). Since $\mathcal{I}$ is the identity, it is often neglected in the literature, which can lead to unclear and potentially ambiguous Bayesian derivations. Thus, we choose to adopt this heavier, but more precise notation.

We assume access to a specific $f^{NN}\in\mathbb{F}$. Practically this will be our \textsc{nn} architecture, i.e. an operator that, given a set of parameters $\thetab\in\Theta$ gives rise to a function from $\mathbb{X}$ to $\mathbb{Y}$. Having $f^{NN}$ fixed, one may consider $f$ not to be stochastic anymore, we choose to still explicitly condition on $f$ in order to have a clearer notation in later stages. Note that, despite not being covered in this work, a proper stochastic derivation also on the \textsc{nn} architecture should be feasible. 

\subsection{Objective}
The \textsc{nn}'s parameter optimization process in this full Bayesian probabilistic framework can be viewed as: given a fixed $f^{NN}\in\mathbb{F}$, namely the \textsc{nn} architecture, maximise the reconstruction probability of $\mathcal{I}(\xc_n)$ over the dataset $\mathcal{D}$
\begin{equation}
    \mathbb{E}_{\xc_n\sim p(\xc)}
    \left[p(\yc|\xc_n,f^{NN})\big|_{\yc=\mathcal{I}(\xc_n)}
    \right]
    =
    \sum_{\xc_n\in\mathcal{D}}
    p(\yc|\xc_n,f^{NN})\big|_{\yc=\mathcal{I}(\xc_n)},
\end{equation}
where the untractable $p(\yc|\xc_n,f^{NN})$ can be expanded in $\thetab$ and thus related to our hypothesis (\ref{eq:reconstruction_hp}) as
\begin{equation}
    p(\yc|\xc_n,f^{NN})
    =
    \mathbb{E}_{\thetab\sim p(\thetab|\xc_n,f^{NN})}
    \left[ p(\yc|\xc_n,\thetab,f^{NN})
    \right].
\end{equation}
Notice that the only unfixed quantity is the distribution on the parameters, which we will optimize for. We are not interested in finding a datapoint-dependant distribution, but rather one that maximise all reconstructions at the same time, i.e. $p(\thetab|f^{NN}) = p(\thetab|\xc_n,f^{NN})$. We can then frame Bayesian optimization as: find a distribution on parameters such that
\begin{align}
    q(\thetab)
    & \in
    \arg\max_{p(\thetab|f^{NN})}
    \sum_{\xc_n\in\mathcal{D}}
        \mathbb{E}_{\thetab\sim p(\thetab|f^{NN})}
        \left[ p(\yc|\xc_n,\thetab,f^{NN})\big|_{\yc=\mathcal{I}(\xc_n)}
        \right] \\
    & = \arg\max_{p(\thetab|f^{NN})}
    \sum_{\xc_n\in\mathcal{D}}
        \mathbb{E}_{\thetab\sim p(\thetab|f^{NN})}
        \left[ p(\mathcal{I}(\xc_n)\big|\mathcal{N}(f^{NN}_\thetab(\xc_n),\Sigma))
        \right].
\end{align}
Moreover, finding this optimum in the space $\Delta(\Theta)$ of all distributions on $\Theta$ is not tractable. So, as commonly done, we restrict ourselves to the subset $\mathcal{G}(\Theta)\subset\Delta(\Theta)$ of Gaussians over $\Theta$. Then, a \emph{solution} in our context is
\begin{equation}\label{eq:bayesian_setting_objective}
    q(\thetab)
    \in
    \arg\max_{q\in\mathcal{G}(\Theta)}
    \sum_{\xc_n\in\mathcal{D}}
        \mathbb{E}_{\thetab\sim q(\thetab)}
        \left[ p(\mathcal{I}(\xc_n)\big|\mathcal{N}(f^{NN}_\thetab(\xc_n),\Sigma))
        \right].
\end{equation}
We emphasize that this solution has no guarantees of being unique, but we are interested in finding one of them. 

\subsection{Joint distribution for a fixed datapoint}
Let us first get a better understanding of the joint distribution on $\mathbb{Y}\times\Theta$ conditional to a fixed datapoint $\xc\in\mathbb{X}$ and a network architecture $f\in\mathbb{F}$
\begin{equation}
    p(\yc,\thetab|\xc,f).
\end{equation}
This distribution has two \emph{marginals}
\begin{equation}\label{marginal_y}
    p(\yc|\xc,f)
\end{equation}
\begin{equation}\label{marginal_theta}
    p(\thetab|\xc,f)
\end{equation}
and two \emph{conditionals}
\begin{equation}\label{conditional_y}
    p(\yc|\thetab,\xc,f)
\end{equation}
\begin{equation}\label{conditional_theta}
    p(\thetab|\yc,\xc,f).
\end{equation}
These four quantities must satisfy the system of two ``recursive'' equations
\begin{align}\label{eq:marginal_y}
    p(\yc|\xc,f)
    & = \int_\Theta p(\yc|\thetab,\xc,f) p(\thetab|\xc,f) \text{d}\thetab = \\
    & = \mathbb{E}_{\thetab\sim p(\thetab|\xc,f)} [p(\yc|\thetab,\xc,f)] \nonumber
\end{align}
\begin{align}\label{eq:marginal_theta}
    p(\thetab|\xc,f)
    & = \int_\mathbb{Y} p(\thetab|\yc,\xc,f) p(\yc|\xc,f) \text{d}\yc = \\
    & = \mathbb{E}_{\yc\sim p(\yc|\xc,f)} [p(\thetab|\yc,\xc,f)]. \nonumber
\end{align}
If these are satisfied then the joint is a well-defined distribution and we can apply Bayes rule
\begin{equation}
    p(\yc|\thetab,\xc,f) p(\thetab|\xc,f)
    =
    p(\yc,\thetab|\xc,f)
    =
    p(\thetab|\yc,\xc,f) p(\yc|\xc,f)
\end{equation}
which in logarithmic form is
\begin{equation}\label{eq:bayes_rule_log}
    \log p(\yc|\thetab,\xc,f) + \log p(\thetab|\xc,f)
    =
    \log p(\thetab|\yc,\xc,f) + \log p(\yc|\xc,f).
\end{equation}
We can factor in the assumptions. The ``data noise'' \emph{assumption} gives us one of the two conditionals:
\begin{equation}\label{eq:assumption_datanoise}
    p(\yc|\thetab,\xc,f) 
    \sim \mathcal{N}(\yc|\mu=f_\thetab(\xc), \sigma^2=\Sigma).
\end{equation}
The ``Gaussian parameter'' \emph{assumption} gives us one of the two marginals: 
\begin{equation}\label{eq:assumtion_gaussianparameters}
    p(\thetab|\xc,f) 
    = q^t(\thetab)
    \sim \mathcal{N}(\thetab|\mu=\thetab_t, \sigma^2=\Hb_t^{-1}).
\end{equation}
With these in place, the joint distribution is uniquely defined. The other marginal, by Eq.~\ref{eq:marginal_y}, is 
\begin{equation}\label{eq:marginal_y_untractable}
    p(\yc|\xc,f)
    = \mathbb{E}_{\thetab\sim q^t(\thetab)} [p(\yc|\mathcal{N}(f_\thetab(\xc), \Sigma))]
\end{equation}
and the other conditional, by Bayes rule, is
\begin{equation}
    p(\thetab|\yc,\xc,f) = \frac{p(\yc|\thetab,\xc,f) p(\thetab|\xc,f)}{p(\yc|\xc,f)}.
\end{equation}

Despite being uniquely defined, the integral in Eq.~\ref{eq:marginal_y_untractable} is, with a general $f$, intractable, and so is the joint distribution.

\textbf{But why do we even care?} The intractability of $p(\yc|\xc,f)$ in Eq.~\ref{eq:marginal_y_untractable} may at first glance appear irrelevant. This is the case, for instance, with bayes by backprop \citep{blundell2015bayesbybackprop} methods. They simply need access to the gradient of this quantity. For this purpose, a simple Monte Carlo estimate of the expectation is enough.

On the other hand, we are interested in recovering a meaningful distribution on parameters. This imply that we aim at using Eq.~\ref{eq:marginal_y_untractable} to enforce that Eq.~\ref{eq:marginal_theta} holds. For this purpose we need access to the the density $p(\yc|\xc,f)$, so a Monte Carlo estimate of Eq.~\ref{eq:marginal_y_untractable} is not enough.

\subsection{Linear $f$}\label{sec:proof_covariance}
\begin{theorem}
Given the data noise assumption from Eq.~\ref{eq:assumption_datanoise}
\begin{equation}
p(\yc|\thetab,\xc,f)\sim \mathcal{N}(\yc|\mu=f_\thetab(\xc), \sigma^2=\Sigma),
\end{equation}
given the Gaussian parameter assumption from Eq.~\ref{eq:assumtion_gaussianparameters} for some $\thetab_t$, $\Hb_t$
\begin{equation}p(\thetab|\xc,f) = q^t(\thetab) \sim \mathcal{N}(\thetab|\mu=\thetab_t, \sigma^2=\Hb_t^{-1}),
\end{equation}
assume that $f$ is linear in $\thetab$, i.e.
\begin{equation}
    f_\thetab(\xc) = f_0(\xc) + J f_\cdot(\xc) \thetab
    \qquad\forall\thetab\in\Theta,\forall \xc\in\mathbb{X}.
\end{equation}
Then the joint distribution is Gaussian itself
\begin{equation}
    p(\thetab,\yc|\xc,f) \sim \mathcal{N}((\thetab,\yc)|\mu_t,\Sigma_t)
\end{equation}
where
\begin{equation*}
    \mu_t
    = \begin{pmatrix}
           \thetab_t \\
           f_{\thetab_t} (\xc)
         \end{pmatrix},
   \text{ and }
   \Sigma_t = \begin{pmatrix}
     \Hb_t^{-1} 
     & \Sigma J f_\cdot(\xc)^{\top} \\
     J f_\cdot(\xc) \Sigma
     & \left( J f_\cdot(\xc)^{\top} \Hb_t J f_\cdot(\xc) \right)^{-1} + \Sigma
   \end{pmatrix}.
\end{equation*}
\end{theorem}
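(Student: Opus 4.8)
The plan is to recognize the setup as a textbook affine-Gaussian model and to obtain the joint law by pushing the two independent Gaussian sources — the parameter prior and the reconstruction noise — through a single affine map. Concretely, let $\epsilon\sim\mathcal{N}(\0,\Sigma)$ be the noise implicit in Eq.~\ref{eq:assumption_datanoise}, drawn independently of $\thetab\sim\mathcal{N}(\thetab_t,\Hb_t^{-1})$. Under the linearity hypothesis $f_\thetab(\xc)=f_0(\xc)+Jf_\cdot(\xc)\,\thetab$, the pair can be written as
\begin{equation*}
\begin{pmatrix}\thetab\\ \yc\end{pmatrix}
=
\begin{pmatrix}\0\\ f_0(\xc)\end{pmatrix}
+
\begin{pmatrix}\I & \0\\ Jf_\cdot(\xc) & \I\end{pmatrix}
\begin{pmatrix}\thetab\\ \epsilon\end{pmatrix}.
\end{equation*}
Since an affine image of a jointly Gaussian vector is jointly Gaussian, this immediately establishes that $p(\thetab,\yc\,|\,\xc,f)$ is Gaussian, so it only remains to identify the mean and the covariance.

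For the mean I would take expectations termwise: $\Ebb[\thetab]=\thetab_t$ and $\Ebb[\yc]=f_0(\xc)+Jf_\cdot(\xc)\,\thetab_t=f_{\thetab_t}(\xc)$, reproducing $\mu_t$. For the covariance I would propagate $\mathrm{Cov}(\thetab,\epsilon)=\mathrm{diag}(\Hb_t^{-1},\Sigma)$ through the affine map above, reading off the $\thetab\thetab$ block as $\Hb_t^{-1}$, the cross block as $\Hb_t^{-1}\,Jf_\cdot(\xc)^\top$, and the $\yc\yc$ block as $Jf_\cdot(\xc)\,\Hb_t^{-1}\,Jf_\cdot(\xc)^\top+\Sigma$. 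As an independent cross-check I would read the joint precision directly off the quadratic form $\log p(\yc|\thetab,\xc,f)+\log p(\thetab|\xc,f)$, obtaining (with $J:=Jf_\cdot(\xc)$)
\begin{equation*}
\Lambda=\begin{pmatrix}\Hb_t+J^\top\Sigma^{-1}J & -J^\top\Sigma^{-1}\\ -\Sigma^{-1}J & \Sigma^{-1}\end{pmatrix},
\end{equation*}
and invert it by a Schur complement: the Schur complement of the $\yc\yc$ block collapses to exactly $\Hb_t$, which hands back the $\Hb_t^{-1}$ top-left block for free and lets me recover the remaining blocks consistently with the total-covariance route.

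The step I expect to be delicate is \emph{matching} these computed blocks against the precise expressions written inside $\Sigma_t$. The cross and output-space terms I obtain are $\Hb_t^{-1}J^\top$ and $J\Hb_t^{-1}J^\top+\Sigma$, so closing the argument requires carefully reconciling these with the stated forms — in particular tracking whether each block is naturally expressed in covariance or precision coordinates, and under what (pseudo-)invertibility assumption on $Jf_\cdot(\xc)$ an output-space expression of the form $(J^\top\Hb_tJ)^{-1}$ is even well defined. I would therefore spend most of the write-up pinning down the dimension and transpose conventions for the $D\times P$ Jacobian $Jf_\cdot(\xc)$ and on the ensuing matrix-identity bookkeeping, since joint Gaussianity and the mean are essentially immediate once the affine-map viewpoint is adopted.
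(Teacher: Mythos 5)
Your derivation is correct, and it takes a genuinely different --- and in fact more complete --- route than the paper's. The paper's proof never constructs the joint distribution at all: it only evaluates the marginal $p(\yc|\xc,f)$ by writing out the integral $\int_\Theta p(\yc|\mathcal{N}(f_0(\xc)+Jf_\cdot(\xc)\thetab,\Sigma))\,q^t(\thetab)\,\mathrm{d}\thetab$ and then asserting, with no intermediate steps, that it equals $p(\yc|\mathcal{N}(f_{\thetab_t}(\xc),(Jf_\cdot(\xc)^\top\Hb_t Jf_\cdot(\xc))^{-1}+\Sigma))$; joint Gaussianity and the off-diagonal blocks are left entirely implicit. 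Your affine-pushforward argument establishes joint Gaussianity directly, produces every block of the mean and covariance, and your precision-matrix/Schur-complement cross-check is precisely the computation the paper omits. What each buys: the paper's integral is shorter if one only needs the $\yc$-marginal (which is all the subsequent Hessian derivation actually uses), whereas yours proves the full joint claim that the theorem states.

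The ``delicate reconciliation'' you anticipated at the end cannot be carried out, and you should not read that as a defect of your proof. The blocks you computed, $\mathrm{Cov}(\thetab,\yc)=\Hb_t^{-1}Jf_\cdot(\xc)^\top$ and $\mathrm{Cov}(\yc)=Jf_\cdot(\xc)\,\Hb_t^{-1}Jf_\cdot(\xc)^\top+\Sigma$, are the standard affine-Gaussian expressions and are the correct ones under the theorem's hypotheses. The stated $\Sigma_t$ is inconsistent with them: with $Jf_\cdot(\xc)\in\mathbb{R}^{D\times P}$ (forced by the linearity assumption $f_\thetab(\xc)=f_0(\xc)+Jf_\cdot(\xc)\thetab$), the product $Jf_\cdot(\xc)^\top\Hb_t Jf_\cdot(\xc)$ is not even conformable unless $D=P$, and even in the square invertible case $(J^\top\Hb_t J)^{-1}=J^{-1}\Hb_t^{-1}J^{-\top}$ differs from $J\Hb_t^{-1}J^\top$ unless $J$ is orthogonal; likewise the stated cross blocks $\Sigma Jf_\cdot(\xc)^\top$ and $Jf_\cdot(\xc)\Sigma$ should involve $\Hb_t^{-1}$, not $\Sigma$, and are themselves dimensionally inconsistent. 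The paper's own proof supplies no derivation that could justify these blocks --- its final asserted line is exactly the point where the standard marginalization identity yields your expression instead. So the mismatch you flagged is an error in the theorem's statement, not a gap in your argument; the honest thing your write-up does is refuse to paper over it.
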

\begin{proof}
With the further assumption of linearity of $f$, we can explicitly carry out the integral in the expectation in Eq.~\ref{eq:marginal_y_untractable}
\begin{align*}
    p(\yc|\xc,f)
    & = \mathbb{E}_{\thetab\sim q^t(\thetab)} [p(\yc|\mathcal{N}(f_\thetab(\xc), \Sigma))] = \\
    & = \int_\Theta p(\yc|\mathcal{N}(f_\thetab(\xc), \Sigma)) q^t(\thetab) \text{d}\thetab= \\
    & = \int_\Theta p(\yc|\mathcal{N}(f_\thetab(\xc), \Sigma)) p(\thetab|\mathcal{N}(\thetab_t,\Hb_t^{-1})) \text{d}\thetab= \\
    & = \int_\Theta p(\yc|\mathcal{N}(f_0(\xc)+J f_\cdot(\xc) \thetab, \Sigma)) p(\thetab|\mathcal{N}(\thetab_t,\Hb_t^{-1})) \text{d}\thetab= \\
    & = p(\yc|\mathcal{N}(f_{\thetab_t}(\xc), (J f_\cdot(\xc)^\top \Hb_t J f_\cdot(\xc))^{-1} + \Sigma )).
\end{align*}
We emphasize that as a consequence $\nabla^2_\thetab \log p(\yc|\xc,f)$ is not dependent on $\thetab_t$.
\end{proof}


Having Theorem 1 in place, we can go back to our original problem. We need to deal with a fixed \emph{non-linear} architecture $f^{NN}$. We can exploit Theorem 1 by defining $f^t$: a linearization of $f^{NN}$ with a first-order Taylor expansion around $\thetab_t$
\begin{align}\label{eq:taylor_fNN}
    f^t_\thetab(\xc)
    & := \textsc{Taylor}(f^{NN},\thetab_t)(\xc) = \nonumber\\
    & = f^{NN}_{\thetab_t}(\xc) + J_\thetab f^{NN}_{\thetab_t}(\xc) (\thetab-\thetab_t)
\end{align}
and it holds that
\begin{equation}\label{eq:taylor_fNN_error}
    f^{NN}_\thetab(\xc) = f^t_\thetab(\xc) + \mathcal{O}(\|\thetab-\thetab_t\|^2).
\end{equation}
Recalling Eq.~\ref{eq:reconstruction_hp} both for $f^{NN}$ and for $f^t$
\begin{equation}
    p(\yc|\xc,\thetab,f^{NN}) \sim \mathcal{N}(\yc|\mu=f^{NN}_\thetab(\xc), \sigma^2=\Sigma)
\end{equation} 
\begin{equation}
    p(\yc|\xc,\thetab,f^t) 
    \sim \mathcal{N}(\yc|\mu=f^t_\thetab(\xc), \sigma^2=\Sigma)
\end{equation}
that, together with Eq.~\ref{eq:taylor_fNN_error}, imply
\begin{equation}\label{eq:random_equation_with_Oterm}
    \mathcal{N}\big(\yc|\mu=f^{NN}_\thetab(\xc), \sigma^2=\Sigma\big)
    \sim
    \mathcal{N}\big(\yc|\mu=f^t_\thetab(\xc) + \mathcal{O}(\|\thetab-\thetab_t\|^2), \sigma^2=\Sigma\big),
\end{equation}
where we can interpret the unknown $\mathcal{O}(\|\thetab-\thetab_t\|^2)$ as $\thetab$-dependent noise. More specifically, calling $\gamma>0$ the scalar constant of the $\mathcal{O}$-term, we assume that
\begin{equation}
    \mathcal{O}(\|\thetab-\thetab_t\|^2) 
    \approx
    \epsilon (\thetab-\thetab_t)^2
    \qquad
    \text{where }\epsilon\sim\mathcal{N}(0,\gamma\mathbb{I})
\end{equation}
and thus, from Eq.~\ref{eq:random_equation_with_Oterm}, we have
\begin{equation}
   \mathcal{N}\big(\yc|\mu=f^{NN}_\thetab(\xc), \sigma^2=\Sigma\big)
   \sim 
   \mathcal{N}(\yc|\mu=f^t_\thetab(\xc), \sigma^2=\Sigma + \gamma\|\thetab-\thetab_t\|^2\mathbb{I}).
\end{equation}
At this point, integrals are not analytically tractable, and thus a proper proof is not feasible, the intuition is that this increased variance reflects in increased variance in $p(\thetab|\xc,f^{NN})$

\begin{equation}\label{eq:bigO_error_in_hessian}
    \nabla^2_\thetab \log p(\thetab|\xc,f^{NN})\big|_{\thetab=\thetab_{t+1}} 
    \approx 
    \nabla^2_\thetab \log p(\thetab|\xc,f^t)\big|_{\thetab=\thetab_{t+1}} + \gamma\|\thetab_{t+1}-\thetab_t\|^2,
\end{equation}
where we introduce a hyperparameter $\alpha>0$ to cope with this added variance. If we then assume the Jacobian $J_\thetab f^{NN}$ to be a Lipschitz function, then the Lipschitz constant is an upper bound on $\gamma$, as follows from the Taylor expansion of \cref{eq:taylor_fNN}. If this Lipschitz constant is smaller than the inverse of the gradient step $1/\|\thetab_{t+1}-\thetab_t\|$ (which is not an unreasonable assumption for the gradient ascent to be stable) we have
\begin{equation}
    \gamma\|\thetab_{t+1}-\thetab_t\|^2 \approx \|\thetab_{t+1}-\thetab_t\|
\end{equation}
that gives us a plausible order of magnitude for choosing the hyperparameter $\alpha$.






\textbf{Motivation:} During training, we produce a sequence of Gaussians $q^t(\thetab)\sim\mathcal{N}(\thetab_t,\Hb_t^{-1})$ that we assume to be the distribution $p(\thetab|\xc,f^t)$, at every step $t\geq0$. This distribution $q^t$ is then used for (1) Gaussian derivation in the linear case, for (2) Monte Carlo sampling in the update rule $\thetab_t\rightarrow\thetab_{t+1}$ and (3), as second order derivative, for update rule $\Hb_t\rightarrow \Hb_{t+1}$. 

Moreover, given that this distribution $q^t$ is our "best guess so far", we assume it to be also the distribution $p(\thetab|\xc,f^{NN})$. This, being $f^{NN}$ not linear, (1) cannot be used for Gaussian derivation, (2) can reasonably be used for sampling (and thus we derive the improved update rule \cref{eq:improved_update_rule_theta}), and (3) can somehow be used as second order derivative (and thus we derive the improved update rule \cref{eq:improved_update_rule_H}), but the latter requires some more care. That is why we introduce the parameter $\alpha$.

\subsection{Iterative learning}
Our learning method produces a sequence of Gaussians \begin{equation}
    q^t(\thetab)\sim\mathcal{N}(\thetab|\mu=\thetab_t,\sigma^2=\Hb_t^{-1})
\end{equation} 
and a sequence of linearized functions
\begin{equation}
    f^t
    = \textsc{Taylor}(f^{NN},\thetab_t)
\end{equation}
for every $t\geq0$.

\textbf{Initialization} is trivially done using a Gaussian prior on the parameters.
\begin{equation}
    \thetab_0 = \thetab^\text{prior}
    \qquad
    \Hb_0 = (\Sigma^\text{prior})^{-1}
\end{equation}

\textbf{Iterative step} is made in two steps. First, having access to $\thetab_t$, we ``generate'' the linearization $f^t$. Practically this is equivalent to computing the two quantities $f^{NN}_{\thetab_t}(\xc)$ and $J_\thetab f^{NN}_{\thetab_t}(\xc)$, that, together with the value $\thetab_t$ are actually equivalent to ``generating'' $f^t$, as Eq.~\ref{eq:taylor_fNN} shows.

Second, we compute the Gaussian parameters $\thetab_{t+1}$ and $\Hb_{t+1}$.

Recalling our aim of maximizing the quantity \cref{eq:bayesian_setting_objective}, update on $q(\cdot)$ means, $\thetab_t\rightarrow\thetab_{t+1}$, is \emph{ideally} made through gradient ascent steps on $p(\yc|\xc,f)|_{\yc=\mathcal{I}(\xc)}$. As this is intractable, we instead do gradient steps on the lower bound $\mathcal{L}_{\yc}$ of (the $\log$ of) Eq.~\ref{eq:marginal_y}
\begin{equation}
    \thetab_{t+1} = \thetab_t + \lambda \nabla_\thetab \mathcal{L}_{\yc}\Big|_{\yc=\mathcal{I}(\xc)}
\end{equation}
where
\begin{equation}
    \mathcal{L}_{\yc} 
    = \mathbb{E}_{\thetab\sim p(\thetab|\xc,f^t)} [\log p(\yc|\thetab,\xc,f^t)] 
    \leq
    \log \mathbb{E}_{\thetab\sim p(\thetab|\xc,f^t)} [ p(\yc|\thetab,\xc,f^t)] 
\end{equation}
and so
\begin{align}\label{eq:theta_update_rule}
    \nabla_\thetab\mathcal{L}_{\yc} \big|_{\yc=\mathcal{I}(\xc)}
    & = \mathbb{E}_{\thetab\sim p(\thetab|\xc,f^t)} \left[\nabla_\thetab\log p(\yc|\thetab,\xc,f^t)\big|_{\yc=\mathcal{I}(\xc)}
    \right] = \nonumber\\
    & = \mathbb{E}_{\thetab\sim q^t(\thetab)}
    \left[ \nabla_\thetab \log p(\mathcal{I}(\xc)\big|\mathcal{N}(f^t_\thetab(\xc),\Sigma))
        \right].
\end{align}

Recalling the Laplace approximation \cref{eq:laplace_definition}, the negative precision, $-\Hb_{t+1}$, is \emph{ideally} set to be the the hessian of the log probability $p(\thetab|\xc,f)$, evaluated in $\thetab_{t+1}$. As this is intractable we instead set it to the hessian of the lower bound $\mathcal{L}_\thetab$ of (the log of) \cref{eq:marginal_theta}
\begin{equation}
    \Hb_{t+1} = - \nabla^2_\thetab \mathcal{L}_\thetab\Big|_{\thetab=\thetab_{t+1}}
\end{equation}
where
\begin{equation}
    \mathcal{L}_\thetab
    = \mathbb{E}_{\yc\sim p(\yc|\xc,f^t)} [\log p(\thetab|\yc,\xc,f^t)] 
    \leq
    \log \mathbb{E}_{\yc\sim p(\yc|\xc,f^t)} [ p(\thetab|\yc,\xc,f^t)] 
\end{equation}
and so
\begin{align*}
    \nabla^2_\thetab \mathcal{L}_\thetab\Big|_{\thetab=\thetab_{t+1}}
    & = \mathbb{E}_{\yc\sim p(\yc|\xc,f^t)} \left[\nabla^2_\thetab\log p(\thetab|\yc,\xc,f^t)\Big|_{\thetab=\thetab_{t+1}} 
    \right]  \\
    & \text{via \cref{eq:bayes_rule_log}} \\
    & = \mathbb{E}_{\yc\sim p(\yc|\xc,f^t)} 
    \Big[
    \nabla^2_\thetab\log p(\yc|\thetab,\xc,f^t)\Big|_{\thetab=\thetab_{t+1}} +
    \nabla^2_\thetab\log p(\thetab|\xc,f^t)\Big|_{\thetab=\thetab_{t+1}} + \\
    & \qquad\qquad -
    \nabla^2_\thetab\log p(\yc|\xc,f^t)\Big|_{\thetab=\thetab_{t+1}} 
    \Big] \\
    &\text{via Theorem 1} \\
    & = \mathbb{E}_{\yc\sim p(\yc|\xc,f^t)} 
    \left[
    \nabla^2_\thetab\log p(\yc|\thetab,\xc,f^t)\Big|_{\thetab=\thetab_{t+1}} +
    \nabla^2_\thetab\log p(\thetab|\xc,f^t)\Big|_{\thetab=\thetab_{t+1}}
    \right] \\
    & \text{via the chain rule \cref{eq:hessian_chain_rule}}\\
    & = \mathbb{E}_{\yc\sim p(\yc|\xc,f^t)} 
    \left[
    J_\thetab f^t_\cdot(\xc)^\top
    \nabla^2_{\yc}\log p(\yc|\thetab_{t+1},\xc,f^t)
    J_\thetab f^t_\cdot(\xc)
    +
    \nabla^2_\thetab\log p(\thetab|\xc,f^t)\Big|_{\thetab=\thetab_{t+1}}
    \right] \\
    & \text{via HP \cref{eq:reconstruction_hp}}\\
    & = \mathbb{E}_{\yc\sim p(\yc|\xc,f^t)} 
    \left[
    - J_\thetab f^t_\cdot(\xc)^\top
    \Sigma^{-1}
    J_\thetab f^t_\cdot(\xc)
    +
    \nabla^2_\thetab\log p(\thetab|\xc,f^t)\Big|_{\thetab=\thetab_{t+1}}
    \right] \\
    & = 
    - J_\thetab f^t_\cdot(\xc)^\top
    \Sigma^{-1}
    J_\thetab f^t_\cdot(\xc)
    +
    \nabla^2_\thetab\log p(\thetab|\xc,f^t)\Big|_{\thetab=\thetab_{t+1}} \\
    & = 
    - J_\thetab f^t_\cdot(\xc)^\top
    \Sigma^{-1}
    J_\thetab f^t_\cdot(\xc)
    +
    \nabla^2_\thetab\log q^t(\thetab)\Big|_{\thetab=\thetab_{t+1}} \\
    & = 
    - J_\thetab f^t_\cdot(\xc)^\top
    \Sigma^{-1}
    J_\thetab f^t_\cdot(\xc)
    - \Hb_t.
\end{align*}

\subsubsection{Improved update rule}
As said, the update $\thetab_t\rightarrow\thetab_{t+1}$ is \emph{ideally} made through gradient ascent steps on $p(\yc|\xc,f)|_{\yc=\mathcal{I}(\xc)}$, but we instead use the tractable lower bound with $f^t$. We can perform the same derivation using $f^{NN}$ in place of $f^t$. Assuming $p(\thetab|\xc,f^{NN})\sim q^t(\thetab)$ for sampling, leads to the improved update rule
\begin{equation}\label{eq:improved_update_rule_theta}
    \thetab_{t+1}
    = \thetab_t + 
    \lambda \mathbb{E}_{\thetab\sim q^t(\thetab)}
    \left[ \nabla_\thetab \log p(\mathcal{I}(\xc)\big|\mathcal{N}(f^{NN}_\thetab(\xc),\Sigma))
    \right].
\end{equation}

Similarly, the negative precision $-\Hb_{t+1}$ is \emph{ideally} set to be the hessian of the log probability $p(\thetab|\xc,f)$. but instead, we use the tractable lower bound with $f^t$. Here we cannot perform the same derivation, since Theorem 1 does not hold anymore. Instead, we can rely on the estimate \cref{eq:bigO_error_in_hessian} to improve the term 
\[
\nabla^2_\thetab\log p(\thetab|\xc,f^t)\Big|_{\thetab=\thetab_{t+1}} = -\Hb_t 
\qquad\longrightarrow\qquad 
\nabla^2_\thetab\log p(\thetab|\xc,f^{NN})\Big|_{\thetab=\thetab_{t+1}} \approx -(1-\alpha)\Hb_t,
\]
and this leads to the improved update rule
\begin{equation}\label{eq:improved_update_rule_H}
    \Hb_{t+1}
    =
    (1-\alpha) \Hb_t
    + J_\thetab f^t_\cdot(\xc)^\top
    \Sigma^{-1}
    J_\thetab f^t_\cdot(\xc).
\end{equation}

\section{Fast Hessian}\label{sec:fast_hessian_appendix}
We are interested in computing the hessian of a loss function. For this purpose, the Jacobian of the \textsc{nn} w.r.t. parameters plays a crucial role. In this section we develop a better understanding of this object, we derive the backpropagation (also used by the BackPack library~\citep{dangel2020backpack}) and finally, we explain our approximated backpropagation that allows linear scaling.

\subsection{Jacobian of a Neural Network}
Let us first define some terminology that we will need for chain rule derivations. A \textsc{nn} is a composition of $l$ functions $f:=f_{L_l}\circ f_{L_{l-1}} \circ\,\dots\,\circ f_{L_2} \circ f_{L_1}$:
\[
x_0 \underset{f_{L_1}}{\xrightarrow{\hspace*{1cm}}} x_1 \longrightarrow 
\quad\dots\quad 
\longrightarrow 
x_{i-1} \underset{f_{L_i}}{\xrightarrow{\hspace*{1cm}}} x_i \longrightarrow 
\quad\dots\quad 
\longrightarrow
x_{l-1} \underset{f_{L_l}}{\xrightarrow{\hspace*{1cm}}} x_l
\]
where there are parametric and non-parametric function $f_{L_i}$. We here highlight two common parametric functions and a common non-parametric function.

\textbf{Parametric function} such as a linear layer 
\[ x_i = 
    f_{L_i}(x_{i-1}) = 
    \phi_{L_i} x_{i-1} \quad
    \text{ where }\phi_{L_i}\in\mathfrak{M}(|x_i|,|x_{i-1}|)
\]
or convolution
\[ x_i = 
    f_{L_i}(x_{i-1}) = \texttt{conv}_{\text{feat}=\phi_{L_i}} (x_{i-1}) \quad
    \text{ where }\phi_{L_i}\in\mathfrak{M}(\text{in channel}, \text{out channel}, \text{feat height}, \text{feat width})
\]
\textbf{Non-parametric} such as activation functions $L_i=\texttt{tanh},\texttt{ReLU}\dots$
\[ x_i = 
    f_{L_i}(x_{i-1}) \quad
    \text{ where }|x_i|=|x_{i-1}| 
\]

What is conventionally called \textit{layer} is actually a composition of two functions: a linear function and an activation function. For sake of clarity in our derivation, we do not adopt this convention and we use the word ``layer'' to indicate the singular ``function'' component of the \textsc{nn}.

Let us now consider a \textsc{nn} with $w$ parametric layers and $l-w$ activation layers
\[ x_l = f_{\phi}(x_0) = f_{L_l}\circ\dots\circ f_{L_1}(x_0)
\qquad \text{where }\phi=(\phi_1,\dots,\phi_w),\]
and define the bijection
\[\begin{array}{lcccl}
    \mathcal{W}:
        & \{1,\dots,w\}
        & \longrightarrow 
        & \{i | \text{ s.t. } L_i \text{ is parametric layer} \}
        & \subseteq \{1,\dots,l\} \\
        & p
        & \longmapsto 
        & i \text{ s.t. } L_i \text{ has parameters } \phi_p
        &
\end{array}\]
from the subset of parametric layers to the corresponding index in $\phi=(\phi_1,\dots,\phi_w)$.

The Jacobian w.r.t. the parameters $J_\phi f_\phi(x_0)\in\mathfrak{M}(|x_l|,|\phi_1|+\dots+|\phi_w|)$
is a matrix with number-of-output $|x_l|$ rows and number-of-parameters $|\phi|$ columns. For reference, just storing this matrix can exceed memory limits even with the smallest autoencoder working on \textsc{mnist}.

There are two ways of looking at this matrix: (1) \textbf{row by row}, that is output by output or (2) \textbf{block-of-columns by block-of-columns}, that is layer by layer. 

\subsection{Jacobian per output}
Each row of the Jacobian corresponds to the gradient w.r.t. the parameters of an element of the output
\[
J_\phi f_\phi(x_0) = 
\left(\begin{array}{c}
    \nabla_\phi [f_\phi(x_0)]_1 \\
    \hline
    \vdots \\
    \hline
    \nabla_\phi [f_\phi(x_0)]_{|x_l|} 
\end{array}\right)
\]
This can be computed by defining $|x_l|$ loss functions
\[ loss_k(x_l) := [x_l]_k \quad\text{for }k=1,\dots,|x_l| \]
and backpropagating each of those to obtain one line at a time. The disadvantage of this formulation is that we cannot reuse computation for one loss function to improve the computation of other loss functions (they are \textit{independent}). Moreover, we need to store all these rows at the same time in order to compute $J^\top J$, which is computationally impractical.

\subsection{Jacobian per layer}
Each column of the Jacobian is the derivative of the output vector w.r.t.\@ a single parameter. We can then group the parameters (i.e. columns) layer by layer
\[
J_\phi f_\phi(x_0) = 
\left(\begin{array}{c|c|c}
    & & \\
    J_{\phi_1}f_\phi(x_0) &
    \,\dots\, &
    J_{\phi_w}f_\phi(x_0) \\
    & &
\end{array}\right)
\]
where $J_{\phi_p}f_\phi(x_0)\in\mathfrak{M}(|x_l|,|\phi_p|)$. Let us focus on the computation of $J_{\phi_p}f_\phi(x_0)$ for a fixed layer $p=1,\dots,w$. First notice that the parameters $\phi_p$ in $f_\phi=f_{L_l}\circ \,\dots\,\circ f_{L_1}$ only appear in $f_{L_{\mathcal{W}(p)}}$ and so
\[ \frac{\partial f_{L_i}}{\partial \phi_p} (x_{i-1}) = 0 \text{ if }i\not=\mathcal{W}(p). \]\\
\textbf{Chain rule} (informal)
\begin{align*}
    \frac{\partial f_\phi(x_0)}{\partial\phi_p}
        &= \frac{\partial x_l}{\partial\phi_p} = \\
        &= \frac{\partial x_l}{\partial x_{l-1}} \,
            \frac{\partial x_{l-1}}{\partial\phi_p}= \\
        &=  \underbrace{\frac{\partial x_l}{\partial x_{l-1}} }_{\substack{\text{layer $l$} \\ \text{w.r.t. input}}}
            \underbrace{\frac{\partial x_{l-1}}{\partial x_{l-2}} }_{\substack{\text{layer $l-1$} \\ \text{w.r.t. input}}}
            \quad\dots\quad
            \underbrace{\frac{\partial x_{\mathcal{W}(p)+1}}{\partial x_{\mathcal{W}(p)}} }_{\substack{\text{layer $\mathcal{W}(p)+1$} \\ \text{w.r.t. input}}}
            \underbrace{\frac{\partial x_{\mathcal{W}(p)}}{\partial \phi_p} }_{\substack{\text{layer $\mathcal{W}(p)$} \\ \text{w.r.t. parameters}}}
\end{align*}\\
\textbf{Chain rule} (formal)
\begin{equation}\label{jacobian_chain_rule}
J_{\phi_p} f_\phi(x_0) =
\left(\prod_{\mathcal{W}(p)+1}^{k=l} 
    J_{x_{k-1}}f_{L_k}(x_{k-1}) \right)
J_{\phi_p} f_{L_{\mathcal{W}(p)}}(x_{\mathcal{W}(p)-1})
\end{equation}
The intuition for the chain rule is that the Jacobian $J_{\phi_p} f_\phi(x_0)$ is the composition of the Jacobians w.r.t.\@ \emph{input} of subsequent layers times the Jacobian w.r.t.\@ \emph{parameters} of the specific layer. Thus, we can reuse computation for one layer to improve the computation of other layers, specifically the product of Jacobians w.r.t.\@ input.
Moreover, we can compute $J_p^\top J_p$ layer by layer without ever storing the full Jacobian.

\subsubsection{Jacobian of a layer w.r.t.\@ to input}
The Jacobian of a standard \textbf{linear layer} w.r.t.\@ to the input is 
\[ J_{x_{p-1}}f_{\phi_p}(x_{p-1}) = \phi_p \]
and this remains the same also in the case with a bias. The Jacobian of a \textbf{convolutional layer} w.r.t.\@ to the input is
\[
J_{x_{p-1}} \texttt{conv}_{\text{feat}=\phi_p} (x_{p-1})
= \mathcal{M}(\texttt{conv}_{\text{feat}=\phi_p})
\]
The Jacobian of the activation function depends on the specific choice. Recall that, for each layer $i$, $x_i\in\mathbb{R}^{|x_i|}$ is a vector
\[ x_i = \big( [x_i]_k \big)_{k=1,\dots,|x_i|} \]
where $[x_i]_k\in\mathbb{R}$ is the value in position $k$ of the vector $x_i$.

If $L_i$ is \textbf{tanh}
\[
[x_i]_k = 
[f_{L_i}(x_{i-1})]_k = 
\texttt{tanh}([x_{i-1}]_k) =
\frac{e^{[x_{i-1}]_k} - e^{-[x_{i-1}]_k}} {e^{[x_{i-1}]_k} + e^{-[x_{i-1}]_k}}
\qquad \text{for } k=1,\dots,|x_{i-1}|
\]
then
\[
[J_{x_{i-1}}f_{L_i}(x_{i-1})]_{kj} =
\delta_{kj} \left( 1 - (\texttt{tanh}([x_{i-1}]_k))^2 \right) =
\delta_{kj} \left( 1 - [x_i]_k^2 \right)
\qquad \text{for } k,j=1,\dots,|x_{i-1}|.
\]

If $L_i$ is \textbf{ReLU}
\[
[x_i]_k = 
[f_{L_i}(x_{i-1})]_k = 
\texttt{ReLU}([x_{i-1}]_k) =
\max(0, [x_{i-1}]_k )
\qquad \text{for } k=1,\dots,|x_{i-1}|
\]
then
\[
[J_{x_{i-1}}f_{L_i}(x_{i-1})]_{kj} =
\delta_{kj} \left( 1 \text{ if } [x_{i-1}]_k>0 \text{ else } 0 \right)
\qquad \text{for } k,j=1,\dots,|x_{i-1}|.
\]

\subsubsection{Jacobian of a layer w.r.t.\@ to parameters}
The Jacobian of a standard linear layer w.r.t.\@ to the parameters is
\[ J_{\phi_i}f_{\phi_i}(x_{i-1}) 
= \mathbb{I}_{|x_i|} \otimes x_{i-1} 
\quad\in\mathfrak{M}(|x_i|,|\phi_i|)\]
and in the case with bias $b_i\in\mathbb{R}^{x_i}$ the Jacobian is
\[ J_{\phi_i,b_i}f_{\phi_i,b_i}(x_{i-1}) 
= \mathbb{I}_{|x_i|} \otimes [x_{i-1},1] 
\quad\in\mathfrak{M}(|x_i|,|\phi_i|+|b_i|)\]
The Jacobian of a convolutional layer w.r.t. to the parameters is
\[
J_{\phi_i} \texttt{conv}_{\text{feat}=\phi_i} (x_{i-1})
= J_{\phi_i} \texttt{conv}^T_{\text{feat}=rev(x_{i-1})} (\phi_{i})
= \mathcal{M}(\texttt{conv}_{\text{feat}=rev(x_{i-1})})^T
\quad\in\mathfrak{M}(|x_i|,|\phi_i|)
\]
and in the case with bias $b_i\in\mathbb{R}^{o_i}$ the Jacobian is
\[ J_{\phi_i} \texttt{conv}_{\text{feat}=\phi_i,\text{bias}=b_i} (x_{i-1})
= \big( \mathcal{M}(\texttt{conv}_{\text{feat}=rev(x_{i-1})})^T \big| I \big)
\quad\in\mathfrak{M}(|x_i|,|\phi_i|+|b_i|)
\]

\subsection{Hessian of a Neural Network}
Consider a function $\mathcal{L}:\mathbb{R}^{|x_l|} \rightarrow \mathbb{R}$ from the output of the \textsc{nn} to scalar value. This later will be interpreted as loss or likelihood, but for now, let us stick to the general case.

We are interested in the hessian of this scalar value w.r.t. the parameters of the \textsc{nn}
\[
\nabla^2_\phi \Big( \mathcal{L}(f_\phi(x_0)) \Big)
\in\mathfrak{M}
    \left(
    \sum_{p=1}^w|\phi_i|,\sum_{p=1}^w|\phi_i|
    \right).
\]
Similarly to the previous section, it is convenient to see this matrix as block matrices, separated layer-wise
\[
\nabla^2_\phi \mathcal{L}(f_\phi(x_0))
=
\left(\begin{array}{cccccc}
    \nabla^2_{\phi_1} \mathcal{L}(f_\phi(x_0)) 
        & \frac{\partial^2}{\partial\phi_1\partial\phi_2} 
            \mathcal{L}(f_\phi(x_0)) 
        &
        & \dots 
        &
        & \frac{\partial^2}{\partial\phi_1\partial\phi_w} 
            \mathcal{L}(f_\phi(x_0)) \\ 
    \frac{\partial^2}{\partial\phi_2\partial\phi_1}
            \mathcal{L}(f_\phi(x_0)) 
        & \nabla^2_{\phi_2} \mathcal{L}(f_\phi(x_0)) 
        & 
        &
        &
        & \\ 
    & & & & & \\ 
    \vdots 
        & 
        &
        & \,\ddots\, 
        &
        & \\ 
    & & & & & \\ 
    \frac{\partial^2}{\partial\phi_w\partial\phi_1}      
            \mathcal{L}(f_\phi(x_0))
        & 
        &
        &
        & 
        & \nabla^2_{\phi_w} \mathcal{L}(f_\phi(x_0)
\end{array}\right)
\]

The first common assumption is to consider layers to be independent of each other, i.e. 
\[
\frac{\partial^2}{\partial\phi_i\partial\phi_j} \mathcal{L}(f_\phi(x_0)) = 0 \qquad \forall i\not= j
\]

Let us now fix a layer $p=1,\dots,w$ and focus on a single diagonal block.
\[
\nabla^2_{\phi_p} \mathcal{L}(f_\phi(x_0))
\in\mathfrak{M} (|\phi_p|,|\phi_p|).
\]
According to the chain rule
\begin{equation}\label{eq:hessian_chain_rule}
\nabla^2_{\phi_p} \mathcal{L}(f_\phi(x_0))
=
\underbrace{
J_{\phi_p}f_\phi(x_0)^T \cdot 
    \nabla^2_{x_l}\mathcal{L}(x_l) \cdot
    J_{\phi_p}f_\phi(x_0)
}_{=:G(\phi)}
+
\sum_{i=1}^{|x_l|} 
    [\nabla_{x_l}\mathcal{L}(x_l)]_i \cdot
    \nabla^2_{\phi_p} [f_\phi(x_0)]_i
\end{equation}

The second term of the RHS is equal to 0 if the model perfectly fits the dataset, $\nabla_{x_l}\mathcal{L}(x_l)=0$, OR if $f$ is linear in the parameters, $H_{\phi_p} [f_\phi(x_0)]_i=0$.

The first term of the RHS of Eq.~\ref{eq:hessian_chain_rule}, $G(\phi)$, is in literature referred to as Generalized Gauss-Newton (\textsc{ggn}) matrix. It can be computed efficiently thanks to the view of the Jacobian as layer by layer. 
Using equation (\ref{jacobian_chain_rule}), the expression for the approximated hessian w.r.t. to $\phi_p$ is then
\begin{align*}
    G(\phi)
        &=
            J_{\phi_p}f_\phi(x_0)^T \cdot 
            H_{x_l}\mathcal{L}(x_l) \cdot
            J_{\phi_p}f_\phi(x_0) = \\
        &= 
            J_{\phi_p}
                f_{L_{\mathcal{W}(p)}}^T
            \left(\prod_{k=\mathcal{W}(p)+1}^{l} 
                J_{x_k}f_{L_k}^T \right)
            H_{x_l}\mathcal{L}(x_l)
            \left(\prod_{\mathcal{W}(p)+1}^{k=l} 
                J_{x_k}f_{L_k} \right)
            J_{\phi_p}
                f_{L_{\mathcal{W}(p)}}
\end{align*}
and from this, we can build an efficient backpropagation algorithm.

\begin{algorithm}
\caption{Algorithm for $J_{\phi}f^T \cdot \nabla^2\mathcal{L} \cdot J_{\phi}f$}\label{alg:cap}
\begin{algorithmic}
\State $M$ = $\nabla^2_{x_l}\mathcal{L}(x_l)$
\For{$k=l,l-1,\dots,1$}
\If{$L_k$ is parametric with $\phi_p$ (i.e. $k=\mathcal{W}(p)$):}
    \State $H_p$ = $J_{\phi_p} f_{L_k}^\top \cdot M \cdot J_{\phi_p} f_{L_{k}}$ 
\EndIf
\State $M$ = $J_{x_k}f_{L_k}^\top \cdot M \cdot J_{x_k}f_{L_k}$
\EndFor
\State \textbf{return} $(H_1,\dots,H_w)$
\end{algorithmic}
\end{algorithm}

As it is written, each $H_p$ is a matrix $|\phi_p|\times|\phi_p|$ so we technically obtain the hessian in a block-diagonal form 
\[\left(\begin{array}{ccc}
    H_1 & & 0 \\
     & \ddots & \\
    0 & & H_w
\end{array}\right)\]
if we are interested in the diagonal only, we can construct that by concatenation of the diagonals for each $H_p$.

\subsubsection{Fast approximated version of the Algorithm}
The idea is to backpropagate only the diagonal of the matrix $M$, neglecting all the non-diagonal elements. In a single backpropagation step, we have
\[
M' = J_{x_p}f_{L_p}^\top \cdot M \cdot J_{x_p}f_{L_p}
\]
In order to backpropagate the diagonal only, we need to use the operator 
\[
\text{diag}(M) \mapsto \text{diag}(M')
\]
For linear layers and activation functions, this operator is trivial. For the convolutional layer it turns out that this operator is itself a convolution
\[
\text{diag}(M')
= \texttt{conv}_{\text{feat}=\phi^{(2)}_p} (\text{diag}(M))
\]
where the kernel tensor $\phi^{(2)}_p$ is the pointwise square of the kernel tensor $\phi_p$.

\subsection{Hessian of a Reconstruction Loss}
In the previous Algorithm, the backpropagated quantity is initialized as the hessian of the loss w.r.t. the output of the \textsc{nn} $\nabla^2_{x_l}\mathcal{L}(x_l)$, or, in an equivalent but more compact notation $\nabla^2_{f}\mathcal{L}(f)$. The value of this hessian clearly depends on the specific choice of loss function $\mathcal{L}$.

The most common choice of the likelihood for regression is the Gaussian distribution, while for classification it is the Bernoulli distribution. The Gaussian log-likelihood is 
\begin{equation}
\mathcal{L}(f) := \log p (x | \mu\!=\!f_\theta(x), \sigma^2\!=\!\sigma_d^2)
= 
-\frac{1}{2\sigma_d^2} \|x - f_\theta(x) \|^2 -\log(\sqrt{2\pi}\sigma_d)
\end{equation}
and its hessian is identity scaled with $\sigma_d$
\begin{equation}
\nabla^2_f \log p (x | \mu\!=\!f_\theta(x), \sigma^2\!=\!\sigma_d^2)
=
-(\sigma_d)^{-2} \mathbb{I}
\end{equation}

The Bernoulli log-likelihood is
\begin{equation}
\mathcal{L}(f) := \log p (c| f_\theta(x)) 
= 
\log [\texttt{softmax}(f_\theta(x))]_c
=
[f_\theta(x)]_c - \log\left(\sum_i e^{[f_\theta(x)]_i}\right)
\end{equation}
and its hessian can be written in terms of the vector $\pi=\texttt{softmax}(f_\theta(x))$ of predicted probabilities 
\begin{equation}
\nabla^2_f \log p (c| f_\theta(x)) 
=
- \nabla^2_f \log\left(\sum_i e^{[f_\theta(x)]_i}\right)
=
diag(\pi) - \pi\pi^T
\end{equation}

We highlight that both Hessians are independent on the label, and thus the \textsc{ggn} is equal to the Fisher matrix, this is true every time $p(y|f_\theta(x))$ is an exponential family distribution with natural parameters $f_\theta(x)$. In this paper, we focus on the Gaussian log-likelihood, but we emphasize that the method is not limited to this distribution.

\update{\section{Intuition on optimization of variance in VAEs}\label{sec:vae_optim}}

\update{One can imagine the Gaussian VAE as an infinite mixture of Gaussians (see e.g.\@ \citet{mettei2018vaemixture} for an extensive discussion of this link) where the weights are fixed by the prior on the latent space. To increase the probability of the training data $p(x)$, optimizing the neural network will push the probability mass from regions far away from training data to regions with training data. Thus, the network should learn to have large variance far away from training data, and low variance close to training data.}

\begin{figure}
     \centering
     \begin{subfigure}[b]{0.24\textwidth}
         \centering
         \includegraphics[width=\textwidth]{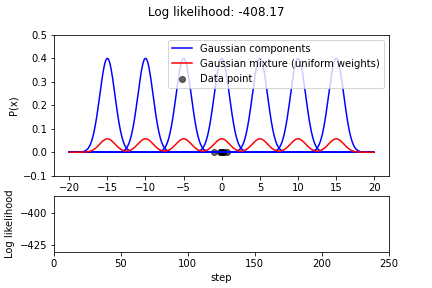}
     \end{subfigure}
     \begin{subfigure}[b]{0.24\textwidth}
         \centering
         \includegraphics[width=\textwidth]{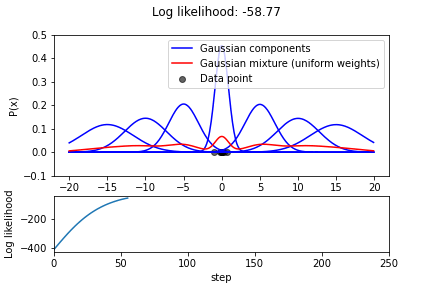}
     \end{subfigure}
     \begin{subfigure}[b]{0.24\textwidth}
         \centering
         \includegraphics[width=\textwidth]{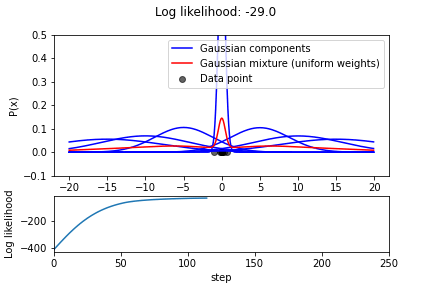}
     \end{subfigure}
     \begin{subfigure}[b]{0.24\textwidth}
         \centering
         \includegraphics[width=\textwidth]{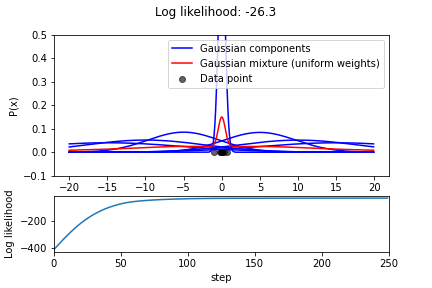}
     \end{subfigure}
        \caption{Snapshots of optimization of a mixture of Gaussians with fixed means and component weights (i.e.\@ only variances are learned). We observe that variances of components far away from data increase in order to push more probability mass to the region where data resides. The Gaussian VAE should exhibit the same behavior in order to maximize data likelihood, but in practice, it does not.}
        \label{fig:vae_optim}
\end{figure}

\update{We illustrate this idea with a toy example (see snapshots in \cref{fig:vae_optim} or animation at \url{https://frederikwarburg.github.io/gaussian_vae.html}). In this example, we show a mixture of Gaussian with components, where we optimize the variance (and fix the mean and the weights of the mixture components). We see that the variance of the components far away from the training data increases, whereas the variance of the component close to the data decreases. The opening example of the paper demonstrates that the Gaussian VAE does not exhibit this behavior even if this is optimal in terms of data likelihood.}

\newpage
\small
\bibliography{neurips_2022}

\begin{thebibliography}{44}
\providecommand{\natexlab}[1]{#1}
\providecommand{\url}[1]{\texttt{#1}}
\expandafter\ifx\csname urlstyle\endcsname\relax
  \providecommand{\doi}[1]{doi: #1}\else
  \providecommand{\doi}{doi: \begingroup \urlstyle{rm}\Url}\fi

\bibitem[Blundell et~al.(2015)Blundell, Cornebise, Kavukcuoglu, and
  Wierstra]{blundell2015bayesbybackprop}
C.~Blundell, J.~Cornebise, K.~Kavukcuoglu, and D.~Wierstra.
\newblock Weight uncertainty in neural networks.
\newblock In \emph{Proceedings of the 32nd International Conference on
  International Conference on Machine Learning - Volume 37}, ICML'15, page
  1613–1622. JMLR.org, 2015.

\bibitem[Botev et~al.(2017)Botev, Ritter, and Barber]{botev2017practical}
A.~Botev, H.~Ritter, and D.~Barber.
\newblock Practical {G}auss-{N}ewton optimisation for deep learning.
\newblock In \emph{International Conference on Machine Learning (ICML)}, pages
  557--565. PMLR, 2017.

\bibitem[Capp{\'e}(2009)]{cappe2009online}
O.~Capp{\'e}.
\newblock Online sequential monte carlo em algorithm.
\newblock In \emph{2009 IEEE/SP 15th Workshop on Statistical Signal
  Processing}, pages 37--40. IEEE, 2009.

\bibitem[Dangel et~al.(2020)Dangel, Kunstner, and Hennig]{dangel2020backpack}
F.~Dangel, F.~Kunstner, and P.~Hennig.
\newblock Back{PACK}: Packing more into backprop.
\newblock In \emph{International Conference on Learning Representations
  (ICLR)}, 2020.

\bibitem[Daxberger and Hern{\'a}ndez-Lobato(2019)]{daxberger2019bayesian}
E.~Daxberger and J.~M. Hern{\'a}ndez-Lobato.
\newblock Bayesian variational autoencoders for unsupervised
  out-of-distribution detection.
\newblock \emph{arXiv preprint arXiv:1912.05651}, 2019.

\bibitem[Daxberger et~al.(2021)Daxberger, Kristiadi, Immer, Eschenhagen, Bauer,
  and Hennig]{daxberger2021laplace}
E.~Daxberger, A.~Kristiadi, A.~Immer, R.~Eschenhagen, M.~Bauer, and P.~Hennig.
\newblock Laplace {R}edux - {E}ffortless {B}ayesian deep learning.
\newblock \emph{Advances in Neural Information Processing Systems (NeurIPS)},
  34, 2021.

\bibitem[Detlefsen et~al.(2021)Detlefsen, Pouplin, Feldager, Geng, Kalatzis,
  Hauschultz, Duque, Warburg, Miani, and Hauberg]{software:stochman}
N.~S. Detlefsen, A.~Pouplin, C.~W. Feldager, C.~Geng, D.~Kalatzis,
  H.~Hauschultz, M.~G. Duque, F.~Warburg, M.~Miani, and S.~Hauberg.
\newblock Stochman.
\newblock \emph{GitHub. Note:
  https://github.com/MachineLearningLifeScience/stochman/}, 2021.

\bibitem[Gal and Ghahramani(2016)]{yarin2016mcdropout}
Y.~Gal and Z.~Ghahramani.
\newblock Dropout as a bayesian approximation: Representing model uncertainty
  in deep learning.
\newblock In \emph{International Conference on Machine Learning (ICML)},
  volume~48, pages 1050--1059, New York, New York, USA, 20--22 Jun 2016. PMLR.

\bibitem[Gelb et~al.(1974)]{gelb1974applied}
A.~Gelb et~al.
\newblock \emph{Applied optimal estimation}.
\newblock MIT press, 1974.

\bibitem[Hinton and Salakhutdinov(2006)]{hinton2006reducing}
G.~E. Hinton and R.~R. Salakhutdinov.
\newblock Reducing the dimensionality of data with neural networks.
\newblock \emph{Science}, 313\penalty0 (5786):\penalty0 504--507, 2006.

\bibitem[Immer et~al.(2021{\natexlab{a}})Immer, Bauer, Fortuin, R{\"a}tsch, and
  Emtiyaz]{immer2021scalable}
A.~Immer, M.~Bauer, V.~Fortuin, G.~R{\"a}tsch, and K.~M. Emtiyaz.
\newblock Scalable marginal likelihood estimation for model selection in deep
  learning.
\newblock In \emph{International Conference on Machine Learning (ICML)}, pages
  4563--4573. PMLR, 2021{\natexlab{a}}.

\bibitem[Immer et~al.(2021{\natexlab{b}})Immer, Korzepa, and
  Bauer]{immer2021locallinerarization}
A.~Immer, M.~Korzepa, and M.~Bauer.
\newblock Improving predictions of bayesian neural nets via local
  linearization.
\newblock In A.~Banerjee and K.~Fukumizu, editors, \emph{Proceedings of The
  24th International Conference on Artificial Intelligence and Statistics},
  volume 130 of \emph{Proceedings of Machine Learning Research}, pages
  703--711. PMLR, 13--15 Apr 2021{\natexlab{b}}.
\newblock URL \url{https://proceedings.mlr.press/v130/immer21a.html}.

\bibitem[Immer et~al.(2022)Immer, van~der Ouderaa, Fortuin, R{\"a}tsch, and
  van~der Wilk]{immer2022invariance}
A.~Immer, T.~F. van~der Ouderaa, V.~Fortuin, G.~R{\"a}tsch, and M.~van~der
  Wilk.
\newblock Invariance learning in deep neural networks with differentiable
  {L}aplace approximations.
\newblock \emph{arXiv preprint arXiv:2202.10638}, 2022.

\bibitem[Jordan et~al.(1999)Jordan, Ghahramani, Jaakkola, and
  Saul]{jordan1999introduction}
M.~I. Jordan, Z.~Ghahramani, T.~S. Jaakkola, and L.~K. Saul.
\newblock An introduction to variational methods for graphical models.
\newblock \emph{Machine Learning}, 37\penalty0 (2):\penalty0 183--233, 1999.

\bibitem[Jospin et~al.(2020)Jospin, Buntine, Boussaid, Laga, and
  Bennamoun]{valentin2020bnn}
L.~V. Jospin, W.~Buntine, F.~Boussaid, H.~Laga, and M.~Bennamoun.
\newblock Hands-on bayesian neural networks -- a tutorial for deep learning
  users, 2020.
\newblock URL \url{https://arxiv.org/abs/2007.06823}.

\bibitem[Khan et~al.(2017)Khan, Lin, Tangkaratt, Liu, and
  Nielsen]{Khan2017Variationaladaptivenewton}
M.~E. Khan, W.~Lin, V.~Tangkaratt, Z.~Liu, and D.~Nielsen.
\newblock Variational adaptive-newton method for explorative learning, 2017.
\newblock URL \url{https://arxiv.org/abs/1711.05560}.

\bibitem[Kingma and Ba(2015{\natexlab{a}})]{KingmaB14adam}
D.~P. Kingma and J.~Ba.
\newblock Adam: {A} method for stochastic optimization.
\newblock In Y.~Bengio and Y.~LeCun, editors, \emph{3rd International
  Conference on Learning Representations, {ICLR} 2015, San Diego, CA, USA, May
  7-9, 2015, Conference Track Proceedings}, 2015{\natexlab{a}}.
\newblock URL \url{http://arxiv.org/abs/1412.6980}.

\bibitem[Kingma and Ba(2015{\natexlab{b}})]{kingma2015adam}
D.~P. Kingma and J.~Ba.
\newblock Adam: A method for stochastic optimization.
\newblock In \emph{International Conference on Learning Representations
  (ICLR)}, 2015{\natexlab{b}}.

\bibitem[Kingma and Welling(2014)]{kingma2014vae}
D.~P. Kingma and M.~Welling.
\newblock {Auto-encoding variational Bayes}.
\newblock In \emph{International Conference on Learning Representations
  (ICLR)}, 2014.

\bibitem[Lakshminarayanan et~al.(2017)Lakshminarayanan, Pritzel, and
  Blundell]{lakshminarayanan2017ensembles}
B.~Lakshminarayanan, A.~Pritzel, and C.~Blundell.
\newblock Simple and scalable predictive uncertainty estimation using deep
  ensembles.
\newblock In \emph{Advances in Neural Information Processing Systems (NIPS)},
  volume~30, 2017.

\bibitem[Laplace(1774)]{laplace1774memoire}
P.~S. Laplace.
\newblock M{\'e}moire sur la probabilit{\'e} des causes par les
  {\'e}v{\'e}nements.
\newblock \emph{M{\'e}moire de l'Acad{\'e}mie Royale des Sciences}, 1774.

\bibitem[Lecun et~al.(1998)Lecun, Bottou, Bengio, and Haffner]{lecun1998mnist}
Y.~Lecun, L.~Bottou, Y.~Bengio, and P.~Haffner.
\newblock Gradient-based learning applied to document recognition.
\newblock \emph{Proceedings of the IEEE}, 86\penalty0 (11):\penalty0
  2278--2324, 1998.
\newblock \doi{10.1109/5.726791}.

\bibitem[Liu et~al.(2015)Liu, Luo, Wang, and Tang]{liu2015faceattributes}
Z.~Liu, P.~Luo, X.~Wang, and X.~Tang.
\newblock Deep learning face attributes in the wild.
\newblock In \emph{Proceedings of International Conference on Computer Vision
  (ICCV)}, December 2015.

\bibitem[MacKay(1992)]{mackay1992laplace}
D.~J.~C. MacKay.
\newblock Bayesian interpolation.
\newblock \emph{Neural Computation}, 4\penalty0 (3):\penalty0 415--447, 05
  1992.

\bibitem[MacKay(1995)]{mackay1995probable}
D.~J.~C. MacKay.
\newblock Probable networks and plausible predictions: {A} review of practical
  {B}ayesian methods for supervised neural networks.
\newblock \emph{Network: Computation in Neural Systems}, 6\penalty0
  (3):\penalty0 469, 1995.

\bibitem[Maddox et~al.(2019)Maddox, Garipov, Izmailov, Vetrov, and
  Wilson]{maddox2019swag}
W.~J. Maddox, T.~Garipov, P.~Izmailov, D.~P. Vetrov, and A.~G. Wilson.
\newblock A simple baseline for bayesian uncertainty in deep learning.
\newblock \emph{Neurips}, abs/1902.02476, 2019.

\bibitem[Martens and Grosse(2015{\natexlab{a}})]{martens2015optimizing}
J.~Martens and R.~Grosse.
\newblock Optimizing neural networks with {K}ronecker-factored approximate
  curvature.
\newblock In \emph{International Conference on Machine Learning (ICML)}, pages
  2408--2417. PMLR, 2015{\natexlab{a}}.

\bibitem[Martens and Grosse(2015{\natexlab{b}})]{Martens2015Kron}
J.~Martens and R.~B. Grosse.
\newblock Optimizing neural networks with kronecker-factored approximate
  curvature.
\newblock In \emph{International Conference on Machine Learning (ICML)},
  2015{\natexlab{b}}.

\bibitem[Mattei and Frellsen(2018)]{mettei2018vaemixture}
P.-A. Mattei and J.~Frellsen.
\newblock Leveraging the exact likelihood of deep latent variable models.
\newblock In S.~Bengio, H.~Wallach, H.~Larochelle, K.~Grauman, N.~Cesa-Bianchi,
  and R.~Garnett, editors, \emph{Advances in Neural Information Processing
  Systems}, volume~31. Curran Associates, Inc., 2018.
\newblock URL
  \url{https://proceedings.neurips.cc/paper/2018/file/0609154fa35b3194026346c9cac2a248-Paper.pdf}.

\bibitem[Nalisnick et~al.(2019{\natexlab{a}})Nalisnick, Matsukawa, Teh, and
  Lakshminarayanan]{nalisnick2019typicality}
E.~Nalisnick, A.~Matsukawa, Y.~W. Teh, and B.~Lakshminarayanan.
\newblock Detecting out-of-distribution inputs to deep generative models using
  typicality, 2019{\natexlab{a}}.
\newblock URL \url{https://arxiv.org/abs/1906.02994}.

\bibitem[Nalisnick et~al.(2019{\natexlab{b}})Nalisnick, Matsukawa, Teh,
  G{\"{o}}r{\"{u}}r, and Lakshminarayanan]{nilisnick2019knownothing}
E.~T. Nalisnick, A.~Matsukawa, Y.~W. Teh, D.~G{\"{o}}r{\"{u}}r, and
  B.~Lakshminarayanan.
\newblock Do deep generative models know what they don't know?
\newblock In \emph{7th International Conference on Learning Representations,
  {ICLR} 2019, New Orleans, LA, USA, May 6-9, 2019}. OpenReview.net,
  2019{\natexlab{b}}.
\newblock URL \url{https://openreview.net/forum?id=H1xwNhCcYm}.

\bibitem[Neal(1996)]{neal1996bayesian}
R.~M. Neal.
\newblock \emph{Bayesian leaning for neural networks}.
\newblock University of Toronto, 1996.

\bibitem[Opper and
  Archambeau(2009{\natexlab{a}})]{Opper09VariationalGaussianApproximationRevisited}
M.~Opper and C.~Archambeau.
\newblock {The Variational Gaussian Approximation Revisited}.
\newblock \emph{Neural Computation}, 21\penalty0 (3):\penalty0 786--792, 03
  2009{\natexlab{a}}.
\newblock ISSN 0899-7667.
\newblock \doi{10.1162/neco.2008.08-07-592}.
\newblock URL \url{https://doi.org/10.1162/neco.2008.08-07-592}.

\bibitem[Opper and Archambeau(2009{\natexlab{b}})]{opper2009variational}
M.~Opper and C.~Archambeau.
\newblock The variational {G}aussian approximation revisited.
\newblock \emph{Neural Computation}, 21\penalty0 (3):\penalty0 786--792,
  2009{\natexlab{b}}.

\bibitem[Park et~al.(2019)Park, Kim, and Kim]{park2019variational}
Y.~Park, C.~Kim, and G.~Kim.
\newblock Variational {L}aplace autoencoders.
\newblock In \emph{International Conference on Machine Learning (ICML)}, pages
  5032--5041. PMLR, 2019.

\bibitem[Paszke et~al.(2019)Paszke, Gross, Massa, Lerer, Bradbury, Chanan,
  Killeen, Lin, Gimelshein, Antiga, Desmaison, Kopf, Yang, DeVito, Raison,
  Tejani, Chilamkurthy, Steiner, Fang, Bai, and Chintala]{paszke2019pytorch}
A.~Paszke, S.~Gross, F.~Massa, A.~Lerer, J.~Bradbury, G.~Chanan, T.~Killeen,
  Z.~Lin, N.~Gimelshein, L.~Antiga, A.~Desmaison, A.~Kopf, E.~Yang, Z.~DeVito,
  M.~Raison, A.~Tejani, S.~Chilamkurthy, B.~Steiner, L.~Fang, J.~Bai, and
  S.~Chintala.
\newblock Pytorch: An imperative style, high-performance deep learning library.
\newblock In \emph{Advances in Neural Information Processing Systems
  (NeurIPS)}, pages 8024--8035. 2019.

\bibitem[Rezende et~al.(2014)Rezende, Mohamed, and Wierstra]{rezende2014vae}
D.~J. Rezende, S.~Mohamed, and D.~Wierstra.
\newblock Stochastic backpropagation and approximate inference in deep
  generative models.
\newblock In \emph{International Conference on Machine Learning (ICML)},
  volume~32 of \emph{Proceedings of Machine Learning Research}, pages
  1278--1286. PMLR, 22--24 Jun 2014.

\bibitem[Ritter et~al.(2018)Ritter, Botev, and Barber]{ritter2018scalable}
H.~Ritter, A.~Botev, and D.~Barber.
\newblock A scalable {L}aplace approximation for neural networks.
\newblock In \emph{International Conference on Learning Representations
  (ICLR)}, volume~6, 2018.

\bibitem[Rumelhart et~al.(1986)Rumelhart, Hinton, and
  Williams]{rumelhart1986learning}
D.~Rumelhart, G.~Hinton, and R.~Williams.
\newblock Learning internal representations by error propagation.
\newblock In \emph{Parallel Distributed Processing: Explorations in the
  Microstructure of Cognition, vol. 1: Foundations}, pages 318--362. 1986.

\bibitem[Sagun et~al.(2016)Sagun, Bottou, and LeCun]{sagun2016eigenvalues}
L.~Sagun, L.~Bottou, and Y.~LeCun.
\newblock Eigenvalues of the hessian in deep learning: Singularity and beyond.
\newblock \emph{arXiv preprint arXiv:1611.07476}, 2016.

\bibitem[Wenzel et~al.(2020)Wenzel, Roth, Veeling, Swiatkowski, Tran, Mandt,
  Snoek, Salimans, Jenatton, and Nowozin]{wenzel2020good}
F.~Wenzel, K.~Roth, B.~Veeling, J.~Swiatkowski, L.~Tran, S.~Mandt, J.~Snoek,
  T.~Salimans, R.~Jenatton, and S.~Nowozin.
\newblock How good is the {B}ayes posterior in deep neural networks really?
\newblock In \emph{International Conference on Machine Learning (ICML)}, pages
  10248--10259, 2020.

\bibitem[Xiao et~al.(2017)Xiao, Rasul, and Vollgraf]{xiao2017fashionmnist}
H.~Xiao, K.~Rasul, and R.~Vollgraf.
\newblock Fashion-mnist: a novel image dataset for benchmarking machine
  learning algorithms, 2017.

\bibitem[Zhang et~al.(2017)Zhang, Sun, Duvenaud, and
  Grosse]{Zhang2017NoisyNaturalGradient}
G.~Zhang, S.~Sun, D.~Duvenaud, and R.~B. Grosse.
\newblock Noisy natural gradient as variational inference.
\newblock \emph{CoRR}, 2017.
\newblock URL \url{http://arxiv.org/abs/1712.02390}.

\bibitem[Zhang et~al.(2021)Zhang, Hayes, and Barber]{zhang21generalization}
M.~Zhang, P.~Hayes, and D.~Barber.
\newblock Generalization gap in amortized inference.
\newblock \emph{Workshop on Bayesian Deep Learning @ NeurIPS}, 2021.

\end{thebibliography}

\newpage

\end{document}